\documentclass[letterpaper, 10 pt, conference]{ieeeconf}
\IEEEoverridecommandlockouts                              

\usepackage{amsmath,amsfonts,amssymb,color,amsthm}
\usepackage{nccmath}
\usepackage{algorithm}
\usepackage{algpseudocode}
\usepackage{array}

\usepackage{graphicx}
\usepackage{cite}
\usepackage{footmisc}
\usepackage{mathtools}
\usepackage{bbm}
\usepackage{bm}
\usepackage{comment}
\usepackage{dsfont}

\usepackage{mathtools}
\usepackage{medmath}
\usepackage{tikz}
\usetikzlibrary{automata, positioning, arrows.meta}
\usepackage{relsize}
\usetikzlibrary{shapes,arrows}
\usepackage{cite}

\usepackage[colorlinks]{hyperref}
\definecolor{mygreen}{rgb}{0.0, 0.5, 0.0}
\definecolor{winered}{rgb}{0.8,0,0}
\definecolor{myblue}{rgb}{0,0,0.8}

\usepackage{scalerel}
\usepackage{xcolor}

\DeclarePairedDelimiterX{\norm}[1]{\lVert}{\rVert}{#1}

\newcommand{\mc}{\mathcal}

\AtBeginDocument{%
  \hypersetup{
    citecolor=myblue,
    linkcolor=winered}}

\newcommand{\smallsquare}{\scalebox{0.7}{$\blacksquare$}}

\newtheorem{problem}{Problem}

\newtheorem{theorem}{Theorem}

\newtheorem{lemma}{Lemma}
\newtheorem{remark}{Remark}

\newtheorem{assumption}{Assumption}

\title{\LARGE \bf Robust Asynchronous Q-Learning under Reward and State Corruption via Batching}
\author{Sreejeet Maity and Aritra Mitra 
\thanks{The authors are with the Department of Electrical and Computer Engineering, North Carolina State University. Email: {\tt \{smaity2, amitra2\}@ncsu.edu}.}}
\begin{document}
\maketitle
\begin{abstract} Motivated by reinforcement learning in harsh environments, we consider the problem of learning an optimal policy subject to adversarially corrupted feedback. Specifically, at each time-step, an adversary can perturb both the reward and state observations of the learner following the Huber contamination model. To defend against such data corruption, 
we propose {{\texttt{BR-Async-Q}}}: a novel, epoch-based, robust \(Q\)-learning algorithm built upon two key ideas: (i) partitioning the online data stream into batches to reduce variance, and (ii) constructing robust estimates of the Bellman optimality operator using such batched data. We prove a high-probability $\ell_\infty$ error bound for {{\texttt{BR-Async-Q}}} that matches that for vanilla \(Q\)-learning, up to a small additive term that scales with the fraction of corrupted samples. To our knowledge, this provides the first robustness guarantee for asynchronous \(Q\)-learning subject to both reward and state corruption. Furthermore, when only rewards are corrupted, the dependence of our algorithm's bound on the corruption fraction is minimax optimal. 
\end{abstract}
\section{Introduction}
\label{sec:Intro}
Reinforcement learning (RL) has emerged as a powerful framework for sequential decision-making, with successful applications spanning healthcare, recommendation systems, autonomous driving, and robotics. At its core, an agent repeatedly interacts with an \emph{unknown} environment, observes rewards and successor states, and uses this feedback to improve its policy and maximize long-term return. In practice, however, this feedback is often \emph{imperfect}: rewards may be noisy, biased, or adversarially manipulated, and observed next states may be distorted by sensing errors, logging failures, or targeted perturbations. Such imperfections are particularly harmful in sequential learning algorithms where corrupted feedback-induced biases can compound over time, generating \emph{poor decision-making policies.}

\textbf{The Setting.} Motivated by the gap between idealized and noisy feedback, we study discounted infinite-horizon RL under \emph{adversarially corrupted data}. We consider learning from an online trajectory generated by a behavior policy, as in \(Q\)-learning~\cite{watkins1992q}, but allow an adversary to corrupt both the \emph{realized reward} and the \emph{observed next state} at each step, thereby distorting the learner’s update signal. We model corruption using the classical Huber model~\cite{huber,huber2}: at each time step, the observed next state and reward can be \emph{arbitrarily} perturbed with a small corruption probability $\varepsilon \in [0, 1/2).$ Moreover, we even allow the clean reward distributions to be heavy-tailed, requiring them to admit no more than a second moment. This makes it particularly challenging for the learner to distinguish between deliberately corrupted reward signals and  clean rewards generated from the tails of reward distributions. Subject to our observation model, which captures severe noise, sensing failures, and deliberate poisoning, we ask: \emph{Is it still possible to reliably learn an optimal policy?} As we discuss below, there are several gaps in our understanding of this question.

\textbf{Related Work.}
Although robustness in sequential decision-making has been extensively studied for simpler settings such as stochastic bandits~\cite{lykouris, kapoor}, much less is known for the online, infinite-horizon, discounted RL setting we consider in this paper. A very recent line of work~\cite{maity2025corruption,maity2025robust,maity2024robust} has initiated the study of robust RL under \emph{reward-only} corruption, assuming perfect state-feedback at all times. However, even for this weaker corruption model, there is a fundamental gap between the upper and lower bounds derived in~\cite{maity2025corruption, maity2025robust}; we explain this gap in Section~\ref{sec:Problem Formulation}. Furthermore, prior work has left it unclear how to simultaneously control the effects of reward and state corruptions. In this paper, we propose a novel approach that resolves both the aforementioned issues. Our contributions are as follows. 

$\bullet$ \textbf{Algorithmic Contribution.} We develop \textcolor{winered}{\texttt{BR-Async-Q}} in Section~\ref{sec:braceQ}, an epoch-based, robust variant of \(Q\)-learning, that can handle asynchronous online sampling and joint corruption of states and rewards. Our approach partitions the online data stream into batches (epochs) and relies on computing a robust, low-variance estimate of the Bellman optimality operator within each epoch. Unlike standard \(Q\)-learning~\cite{watkins1992q} and prior robust variants thereof~\cite{maity2024robust, maity2025corruption} which update the \(Q\)-table at all time steps, \textcolor{winered}{\texttt{BR-Async-Q}} updates the \(Q\)-table infrequently, only once per epoch, but using \emph{variance-reduced} update directions. It is precisely variance-reduction via batching that enables us to close the gap between the upper and lower bounds in prior work~\cite{maity2025corruption, maity2025robust}. 

$\bullet$ \textbf{Theoretical Contribution.} In Theorem~\ref{thm:main theorem 1} of Section~\ref{sec:main_result}, we prove a high-probability finite-time \(\ell_\infty\)-error bound for \textcolor{winered}{\texttt{BR-Async-Q}}, showing that one can still converge ``close" to the optimal state-action value function. Given $T$ samples, our bound features a statistical error term on the order of $\tilde{\mc{O}}(1/\sqrt{T})$, which, in fact, sharpens even the standard \(Q\)-learning bound in~\cite{Qu} in terms of its dependence on the visitation probability of the least-visited state-action pair. Our bound also exhibits a small unavoidable additive bias term that scales with the corruption probability. When only rewards are corrupted, our bias term is minimax optimal, \emph{exactly} matching the fundamental lower bound in~\cite{maity2025corruption}. Thus, our work not only improves existing finite-time rates, but also provides the first provable guarantee of robustness for \(Q\)-learning under the joint state and reward corruption model. 

\section{Background and Problem Formulation} 
\label{sec:Problem Formulation}
We begin with the standard RL background and then describe the problem of interest. Consider a \(\gamma\)-discounted infinite-horizon Markov Decision Process (MDP) \(\mathcal{M}=(\mathcal{S},\mathcal{A},\mathcal{P},R,\gamma)\), where \(\mathcal{S}\) and \(\mathcal{A}\) are finite state and action spaces, \(\mathcal{P}\) is the probability transition kernel, \(R\) is the mean reward function, and \(\gamma\in(0,1)\) is the discount factor. When in state-action pair \((s,a)\), the learner observes a next state \(s'\sim \mc P(\cdot\mid s,a)\) and a stochastic reward \(r(s,a)\sim \mc R(s,a)\) drawn from the reward distribution $\mc R(s,a)$, such that \(\mathbb{E}[r(s,a)] = R(s,a)\) and \(\mathbb{E}[(r(s,a)-R(s,a))^2] = \sigma^2(s,a)\). We assume that the reward means and variances are bounded, i.e., $\exists \bar R,\bar\sigma\ge 1$ such that \(|R(s,a)|\le \bar R\) and \(\sigma^2(s,a)\le \bar\sigma^2\) for all \((s,a)\in\mc S\times\mc A\). We define \(\tilde{\sigma}:=\max\{\bar R,\bar\sigma\}\). 

A policy \(\mu:\mc S\to\Delta(\mc A)\) assigns to each state a distribution over actions. The quality of a policy $\mu$ is captured by an expected discounted infinite-horizon cumulative reward known as the value function $V^{\mu}$, defined as
\(\medmath{V^\mu(s) = \mathbb{E}\left[ \sum_{t=0}^\infty \gamma^t R(s_t, a_t) \,\bigg|\, s_0 = s, \mu \right],}\) where $s_t$ and $a_t$ are the state and action at time $t$, respectively, under the action of the policy $\mu$ on the MDP $\mc{M}$. The goal of the learner is to find a policy $\mu$ that maximizes the value function $V^\mu$ for all states, \emph{without knowledge} of the transition kernels $\mc{P}$ and reward functions $R$ of the underlying MDP. To explain how this is done, we will need to introduce the notion of a state-action value function $Q^\mu $ for a policy $\mu$, defined as
\begin{equation}\medmath{
    Q^\mu(s,a) = \mathbb{E}\left[ \sum_{t=0}^\infty \gamma^t R(s_t,a_t) \,\bigg|\, (s_0, a_0) = (s, a), \mu \right].}
\end{equation} 
The celebrated $Q$-learning algorithm~\cite{watkins1992q} uses data collected by a suitable behavior/sampling policy $\mu$ to iteratively maintain an estimate of the optimal state-action value function, denoted by $Q^*$. It turns out that $Q^*$ is the fixed point of a contractive operator $\mc{T}$ known as the Bellman optimality operator~\cite{suttonRL}. Using this contraction property, classical asymptotic results~\cite{tsitsiklis94, jaakkola} established that the sequence of iterates generated by \(Q\)-learning converges to $Q^*$ almost surely under suitable assumptions on $\mu$. More recently, finite-time rates have been established~\cite{Waiwright, Qu, li2024q}, revealing that when run for $T$ iterations, the final iterate of $Q$-learning converges to $Q^*$ at a rate of $\tilde{\mc{O}}(1/\sqrt{T})$, with high probability. 


\textbf{Adversarially Corrupted Observation Model.} As is standard in \(Q\)-learning~\cite{chen2022Auto, li2024q}, we assume that data are generated under a stochastic behavior policy \(\mu\) satisfying \(\mu(a\mid s)>0\) for all \((s,a)\). In the nominal model, the learner observes \(a_t\sim \mu(\cdot\mid s_t)\), \(s_{t+1}\sim \mc P(\cdot\mid s_t,a_t)\), and \(r_t(s_t,a_t)\sim \mc R(s_t,a_t)\). Here, we assume that the noise process $\{R(s_t,a_t) - r_t(s_t,a_t)\}$ is independent over time and of all other sources of randomness. Our formulation departs from standard \(Q\)-learning in two key aspects. First, unlike prior work which either requires the rewards to be deterministic or ``light-tailed" (e.g., sub-Gaussian), we impose nothing beyond the reward distributions $\mc{R}(s,a)$ admitting a finite second moment, i.e., we allow for \emph{heavy-tailed} rewards. Second, the learner's feedback can be adversarially corrupted according to the classical Huber contamination model~\cite{huber}. 

To explain this corruption model, let \(\{Y_{t,1}\}_{t\ge 0}\) and \(\{Y_{t,2}\}_{t\ge 0}\) be two i.i.d.\ Bernoulli (\texttt{Bern}) sequences with parameters \(\varepsilon_{\mc R},\varepsilon_{\mc Y}\in[0,1/2)\), respectively, independent of the past and of all other sources of randomness. Note, however, that the processes \(\{Y_{t,1}\}_{t\ge 0}\) and \(\{Y_{t,2}\}_{t\ge 0}\) might be mutually correlated. At each time step \(t\), the learner observes \((s_t,a_t)\), and then receives an \emph{observed} reward \(\widetilde r_t\) and an \emph{observed} next state \(\widetilde s_{t+1}\) that are generated as follows: 
\begin{equation}\label{eqn:obs_corruption_model}
\begin{aligned}
\widetilde r_t(s_t,a_t) &= (1-Y_{t,1})\,r_t(s_t,a_t) + Y_{t,1}\,z_t,\\
\widetilde s_{t+1} &= (1-Y_{t,2})\,s_{t+1} + Y_{t,2}\,u_t,
\end{aligned}
\end{equation}
where \(Y_{t,1}\overset{\mathrm{i.i.d.}}{\sim}\texttt{Bern}(\textcolor{winered}{\varepsilon_{\mc R}})\), \(Y_{t,2}\overset{\mathrm{i.i.d.}}{\sim}\texttt{Bern}(\textcolor{winered}{\varepsilon_{\mc Y}})\), \(a_t\sim\mu(\cdot\mid s_t)\), \(s_{t+1}\sim\mc P(\cdot\mid s_t,a_t)\), and \(r_t(s_t,a_t)\sim\mc R(s_t,a_t)\). Here, \(z_t\) is an \emph{arbitrary} (possibly unbounded) adversarial reward value, and \(u_t\in\mc S\) is an \emph{arbitrary} adversarially chosen next state. Both \(z_t\) and \(u_t\) may depend on the entire history up to time \(t\). In simple words, at each time-step $t$, with probability $1-\varepsilon_{\mc R}$ (resp., $1-\varepsilon_{\mc Y}$), the learner observes a noisy reward (resp., next state) from the true reward distribution $\mc{R}(s_t, a_t)$ (resp., true state distribution $\mc{P}(\cdot| s_t, a_t)$), and with probability $\varepsilon_{\mc R}$ (resp., $\varepsilon_{\mc Y}$), an arbitrary reward (resp., next state). Our problem of interest can now be summarized as follows. 

\begin{problem}
Given \(T\) samples \(\{(s_t,a_t,\tilde{s}_{t+1},\tilde{r}_t(s_t,a_t))\}_{t=0}^{T-1}\) generated under the adversarial model in~\eqref{eqn:obs_corruption_model}, and a failure probability \(\delta \in (0,1) \), our goal is to construct a robust estimate \(\hat{Q}\) of the optimal state-action value function \(Q^*\), and derive a high-probability bound on the \(\ell_\infty\)-error \(\|\hat{Q}-Q^*\|_\infty\) that holds with probability at least \(1-\delta\).
\end{problem}

\noindent \textbf{Challenges and Research Gaps.} The standard \(Q\)-learning update~\cite{watkins1992q} uses a ``bootstrapping" mechanism where the current estimate of the \(Q\)-table is used to approximate the ``reward-to-go". The \textbf{joint} reward and state observation corruption model in~\eqref{eqn:obs_corruption_model} not only introduces a bias in the current reward but also in such a bootstrapped estimate. Unless accounted for carefully, this bias might accumulate over time, leading to vacuous bounds. Indeed, even when only rewards are perturbed, such a phenomenon is reported in~\cite{maity2024robust}. As far as we are aware, the more challenging joint corruption model we study here has not been explored at all in prior \(Q\)-learning literature. Furthermore, restricted to the simpler reward corruption model (i.e., when $\varepsilon_{\mc{Y}} =0$) under asynchronous sampling, the upper-bounds derived in~\cite{maity2025corruption} exhibit a corruption-induced $\mc{O}(\sqrt{\varepsilon_{\mc{R}}})$ term that scales inversely with the visitation probability of the least-visited state-action pair. This seems to suggest that if the data-generation behavior policy $\mu$ causes certain state-action pairs to be visited infrequently, then the effect of reward corruption can become pronounced. Interestingly, however,~\cite{maity2025corruption} establishes a fundamental information-theoretic lower bound on the order~$\Omega(\sqrt{\varepsilon_{\mc{R}}})$ that does not exhibit any dependence at all on visitation probabilities. Thus, prior work has left it unclear how to close the aforementioned gap between upper and lower bounds. In this paper, we develop a novel robust \(Q\)-learning algorithm that not only closes this gap, but also handles the joint corruption model. 

To clearly explain the main technical ideas behind our approach, we focus on the tabular setting in this paper. Even for this setting, a non-trivial technical analysis is needed to handle the interplay between asynchronous sampling, heavy-tailed reward distributions, and joint corruptions of states and rewards. To proceed, we make the following assumption that is standard in prior RL work~\cite{tsitsiklisroy,Qu,bhandari_finite, srikant, mitra2024simple}.

\begin{assumption}\label{ass:ergodic}
The Markov chain induced by the behavior policy $\mu$ is aperiodic and irreducible. 
    \end{assumption}
Let \(\pi\) denote the stationary distribution of the Markov chain induced by \(\mu\). Assumption~\ref{ass:ergodic} ensures that \(\pi(s)>0\) for all \(s\in\mc S\). Thus, the stationary visitation probability of any state-action pair \((s,a)\) is \(\lambda(s,a):=\pi(s)\mu(a|s)>0\). We also define the minimum visitation probability by \(\lambda_{\texttt{min}}:=\min_{(s,a)\in\mc S\times\mc A}\lambda(s,a)\). As is fairly common~\cite{korda, narayanan, dalal, bhandari_finite}, we assume that at each time step \(t\), the state \(s_t\) is sampled \emph{independently} from \(\pi\). Throughout the paper, we will refer to this as the ``i.i.d. data generation model". This assumption is made solely for ease of presentation, since, as shown in~\cite{maity2025corruption,maity2025robust}, the analysis extends naturally to the single-trajectory Markovian setting via blocking arguments~\cite{dorfman}.

\section{Robust Estimation under Huber Corruption}
\label{sec:TrimmedMean}
Before presenting our proposed algorithm in Section~\ref{sec:braceQ}, we first describe a key building block, namely Algorithm~\ref{alg:trimmed_huber}, which enables robust mean estimation under Huber contamination. To that end, consider a data set \(\mc{D}\) consisting of \(M\) i.i.d.\ samples of a scalar random variable \(X\) with mean \(\mu_X\) and variance \(\sigma_X^2\). We consider the \emph{contamination model} in~\eqref{eqn:obs_corruption_model}, under which each sample in $\mc{D}$ is independently corrupted with probability \(\varepsilon\), and remains clean otherwise. Let \(\mc{D}'\) denote the resulting corrupted sample set. Note that \(\mc{D}'\) contains $\varepsilon M$ corrupted samples on average; the realized number of corruptions is, however, a random variable.

To estimate \(\mu_X\) robustly from \(\mc{D}'\), we use a slightly modified version of the trimmed mean estimator from~\cite{lugosi}, where the modification accounts for a random number of corrupted samples. We partition \(\mc{D}'\) into two equal halves, \(\mc{D}_1\) and \(\mc{D}_2\). The first half is used to construct lower and upper cutoff levels \((\texttt{a},\texttt{b})\), and the second half is used to compute a clipped empirical mean over this interval; see lines 1-3 of Algorithm~\ref{alg:trimmed_huber}. 
The resulting robust estimator is denoted by
\begin{equation}\label{eqn:huber_trim}
\hat{\mu}_X^{\textcolor{winered}{\textup{H}}}
=
{\texttt{TRIM}}_{{\textup{H}}}[\mc{D}',\varepsilon,\delta]. 
\end{equation}
When the clean samples are known to be bounded in the interval \((\underline{\Gamma},\overline{\Gamma})\), one can obtain a tighter dependence on the corruption fraction $\varepsilon$ by setting the cutoff levels as \((\texttt{a},\texttt{b})=(\underline{\Gamma},\overline{\Gamma})\). In this case, the resulting estimator is given by 
\begin{equation}
\label{eq:refined_trim}
   \hat{\mu}_X^{\textcolor{winered}{\textup{B}}}= {\textup{\texttt{TRIM}}}_{{\textup{B}}}[\mc{D}', (\underline{\Gamma},\overline{\Gamma})] =
   \frac{1}{M}\sum_{X_i\in\mc{D}'}\phi_{\underline{\Gamma},\overline{\Gamma}}(X_i),
\end{equation}
where for $a < b$, we define the function \(\phi_{\texttt{a},\texttt{b}}(x):={\min}\{{\max}\{x,\texttt{a}\},\,\texttt{b}\}, \forall x \in \mathbb{R}\), and $\mc{D}'=\{X_i\}_{i \in [M]}.$ Before proceeding to analyze the estimators in~\eqref{eqn:huber_trim} and~\eqref{eq:refined_trim}, we note that for the quantiles in Line 1 of Algorithm~\ref{alg:trimmed_huber} to be well defined, the object $\zeta$ computed in Algorithm~\ref{alg:trimmed_huber} must be a fraction, implicitly imposing that $\bar{\varepsilon} \leq 1/8.$\footnote{The object $\zeta$ is defined to account for the concentration of (i) empirical quantiles around true quantiles, and (ii) the number of corrupted samples around its mean value.} We will work under this implicit assumption on $\bar{\varepsilon}$ throughout the paper. It is worth emphasizing that such an assumption is also implicitly made in the work of Lugosi~\cite{lugosi}, and papers that have leveraged similar algorithms in other contexts~\cite{yin}.

\begin{algorithm}[h]
\caption{Robust Mean under Huber Corruption (\({\texttt{TRIM}}_{{\textup{H}}}\))}
\label{alg:trimmed_huber}
\begin{algorithmic}[1]
\Require Contaminated dataset  \(\mc{D}'=\{X_1,X_2,\dots,X_M\}=\mc{D}_1\oplus\mc{D}_2\), where \(|\mc{D}_1|=|\mc{D}_2|=M/2\); contamination probability \(\varepsilon\); confidence level \(\delta\). Define
\(\bar{\varepsilon} := 3\varepsilon/2+16\log(8/\delta)/M,
\zeta := 8\bar{\varepsilon} + 24\log(8/\delta)/M.
\)
\State Let $X^*_1 \leq X^*_2 \leq \cdots \leq  X^*_{M/2}$ represent a non-decreasing arrangement of $\mc{D}_1$. Compute \texttt{quantiles}: \(\texttt{a} = X^*_{\zeta M/2}, \quad \texttt{b} = X^*_{(1 - \zeta) M/2}\), where \(\texttt{a} < \texttt{b}\).
\State Define the function \(\phi_{\texttt{a},\texttt{b}}(x):={\min}\{{\max}\{x,\texttt{a}\},\,\texttt{b}\}.\)
\State Output
\(\hat{\mu}_X^{\textcolor{winered}{\textup{H}}}=\frac{2}{M}\sum_{X_i\in\mc{D}_2}\phi_{\texttt{a},\texttt{b}}(X_i).\)
\end{algorithmic}
\end{algorithm}

The following result will play a key role in our subsequent algorithmic design and analysis. 
\begin{theorem}\textbf{(Robust Mean under Huber contamination)}
\label{thm:trim_huber_both}
Let $\delta\in(0,1)$ satisfy $\delta \ge 8e^{-M/2}$.
There exists a universal constant $\mc C\ge 1$ such that,
with probability at least $1-\delta$, the estimate $\hat{\mu}_X^{\textcolor{winered}{\textup{H}}}$ in~\eqref{eqn:huber_trim} satisfies the following bound:
\begin{equation}\label{eq:trim_var_bound}
|\hat\mu_X^{\textcolor{winered}{\textup{H}}}-\mu_X|
\;\le\;
\mc C\,\sigma_X\Bigl(\sqrt{\varepsilon}+\sqrt{\tfrac{\log(8/\delta)}{M}}\Bigr).
\end{equation}
Moreover, if \(X \in[\underline{\Gamma},\overline{\Gamma}]\), then the estimate $\hat{\mu}_X^{\textcolor{winered}{\textup{B}}}$ in~\eqref{eq:refined_trim} satisfies the following bound with probability at least \(1-\delta\),
\begin{equation}\label{eq:trim_bounded_refined}
|\hat{\mu}_X^{\textcolor{winered}{\textup{B}}}-\mu_X|
\;\le\;
\bar{c} \,\left( \overline{\Gamma}-\underline{\Gamma}\right)\Bigl(\varepsilon+\sqrt{\tfrac{\log(8/\delta)}{M}}\Bigr),
\end{equation}
where \(\bar{c} \ge 1\) is a universal constant. 
\end{theorem}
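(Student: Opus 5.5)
Both bounds will be proved by conditioning on the corruption indicators, which are independent of the clean data. The first step is to control the number of corrupted samples. By a multiplicative Chernoff (Bernstein) bound applied to the i.i.d.\ Bernoulli($\varepsilon$) indicators, with probability at least $1-\delta/4$ each half $\mc D_1,\mc D_2$ contains at most $\bar\varepsilon M/2$ corrupted points, where $\bar\varepsilon = 3\varepsilon/2+16\log(8/\delta)/M$. This is exactly why $\bar\varepsilon$ is defined as it is. On this event, the setting reduces to the strong-contamination setting of Lugosi~\cite{lugosi}: an adversary arbitrarily replaces at most an $\bar\varepsilon$-fraction of each half. The replacement may be adaptive, but the adversary cannot change which indices are clean, and that is all the argument uses.

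For~\eqref{eq:trim_var_bound}, I would follow the two-sample trimmed-mean argument.
\begin{itemize}
\item[(i)] \emph{Quantile step.} Let $Q_p$ denote the population $p$-quantile of $X$. Using Bernstein on the indicators $\mathbbm{1}\{X_i\le Q_{\zeta/2}\}$ and $\mathbbm{1}\{X_i\le Q_{2\zeta}\}$ over the clean points of $\mc D_1$, together with the bound of at most $\bar\varepsilon M/2$ corruptions, show that with probability at least $1-\delta/4$ the empirical cutoffs satisfy $Q_{\zeta/2}\le \texttt a\le Q_{2\zeta}$, and symmetrically for $\texttt b$. Here $\zeta=8\bar\varepsilon+24\log(8/\delta)/M$ absorbs both the corruption count and the quantile fluctuations.
\item[(ii)] \emph{Bias of the clipped population mean.} Chebyshev gives $|Q_{\zeta/2}-\mu_X|,|Q_{1-\zeta/2}-\mu_X|\le \sigma_X\sqrt{2/\zeta}$. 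Cauchy--Schwarz then gives
\[
|\mathbb E\phi_{\texttt a,\texttt b}(X)-\mu_X|\le \mathbb E\bigl[|X-\mu_X|\mathbbm 1\{X\notin[\texttt a,\texttt b]\}\bigr]\le \sigma_X\sqrt{4\zeta}.
\]
\item[(iii)] \emph{Concentration on the clean points.} Condition on $\mc D_1$, so that $\texttt a,\texttt b$ are fixed. The clean points of $\mc D_2$ give i.i.d.\ bounded variables $\phi(X_i)$ with variance at most $\sigma_X^2$ and range at most $\texttt b-\texttt a\lesssim \sigma_X/\sqrt\zeta$. Bernstein then yields deviation $\lesssim \sigma_X\sqrt{\log(8/\delta)/M}+\sigma_X\log(8/\delta)/(M\sqrt\zeta)$, and the second term is at most $\sigma_X\sqrt{\log(8/\delta)/M}$ because $\zeta\ge 24\log(8/\delta)/M$.
\item[(iv)] \emph{Effect of corrupted points.} Each corrupted point in $\mc D_2$ is clipped into $[\texttt a,\texttt b]$, so it shifts the average by at most $(2/M)\cdot(\bar\varepsilon M/2)\cdot\max(|\texttt a-\mu_X|,|\texttt b-\mu_X|)\lesssim \bar\varepsilon\sigma_X/\sqrt\zeta\lesssim\sigma_X\sqrt{\zeta}$. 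A small correction is also needed because the clean average is taken over $(1-\bar\varepsilon)M/2$ rather than $M/2$ points; this adds another $O(\bar\varepsilon\sigma_X/\sqrt\zeta)$.
\end{itemize}
Summing the four pieces gives $\sigma_X(\sqrt{\zeta}+\sqrt{\log(8/\delta)/M})\lesssim\sigma_X(\sqrt\varepsilon+\sqrt{\log(8/\delta)/M})$, and a union bound gives failure probability at most $\delta$. The condition $\delta\ge 8e^{-M/2}$ ensures $\log(8/\delta)/M\le 1/2$, so that the constants in $\zeta$ stay meaningful.

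For~\eqref{eq:trim_bounded_refined}, the argument is simpler because the cutoffs are deterministic. Write the estimator as
\[
\frac1M\sum_{\text{clean }i}\phi(X_i)+\frac1M\sum_{\text{corrupt }i}\phi(X_i),
\]
where $\phi(X_i)=X_i$ on clean points. Decompose the error as
\[
\frac1M\sum_i (1-Y_i)(X_i-\mu_X)\;+\;\frac1M\sum_i Y_i\bigl(\phi(\tilde X_i)-\mu_X\bigr).
\]
The first term is an average of independent bounded terms of range $\overline\Gamma-\underline\Gamma$, so Hoeffding bounds it by $(\overline\Gamma-\underline\Gamma)\sqrt{\log(8/\delta)/M}$. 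The second term is at most $(\overline\Gamma-\underline\Gamma)\cdot\frac1M\sum_iY_i$, and by the Chernoff count from the first step this is at most $(\overline\Gamma-\underline\Gamma)(3\varepsilon/2+16\log(8/\delta)/M)$. Finally, $\log(8/\delta)/M\le\sqrt{\log(8/\delta)/M}$, because this ratio is at most $1/2$ under the assumption on $\delta$.

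The main obstacle is the quantile step (i), especially making the constants in $\bar\varepsilon$ and $\zeta$ line up. The difficulty is showing that corrupted points in $\mc D_1$ cannot pull the empirical $\zeta$-quantile outside $[Q_{\zeta/2},Q_{2\zeta}]$. I would handle this by counting: at least $\zeta M/2-\bar\varepsilon M/2$ clean points must lie below $\texttt a$, and at most $\zeta M/2$ points in total lie below it. These two counts are then compared against the Bernstein-concentrated clean empirical CDF at $Q_{\zeta/2}$ and $Q_{2\zeta}$.
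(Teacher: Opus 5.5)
Your proposal is correct and follows essentially the paper's route: for~\eqref{eq:trim_bounded_refined} the paper also splits the error into corrupted and clean parts. It bounds the corrupted part by the Bernstein count $|\mc B|\le\bar\varepsilon M$ times $\overline{\Gamma}-\underline{\Gamma}$ and the clean part by Hoeffding, conditioning on $|\mc S|=j$, which your i.i.d.\ mean-zero terms $(1-Y_i)(X_i-\mu_X)$ avoid; for~\eqref{eq:trim_var_bound} the paper simply defers to Lugosi's two-half trimmed-mean argument, which is what you sketch. One small repair in step (ii): the inequality $|\mathbb E\phi_{\texttt a,\texttt b}(X)-\mu_X|\le\mathbb E[|X-\mu_X|\mathbbm 1\{X\notin[\texttt a,\texttt b]\}]$ presumes $\mu_X\in[\texttt a,\texttt b]$, which $\texttt a\in[Q_{\zeta/2},Q_{2\zeta}]$ does not guarantee for very skewed $X$, so by the triangle inequality add the term $|\texttt a-\mu_X|\,\mathbb P(X<\texttt a)+|\texttt b-\mu_X|\,\mathbb P(X>\texttt b)\lesssim\sigma_X\sqrt{\zeta}$, which is still of the same order.
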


The proof of the claim in~\eqref{eq:trim_var_bound} is a minor modification of that in~\cite{lugosi}, and can be found in~\cite[Appendix~D]{maity2025corruption}. Establishing the claim in~\eqref{eq:trim_bounded_refined} requires a somewhat different argument that we defer to Appendix~\ref{sec:app}. 

\begin{remark}
It is instructive to compare the bound in~\eqref{eq:trim_var_bound} with that in~\eqref{eq:trim_bounded_refined}. While the former exhibits a dependence of $O(\sqrt{\varepsilon})$, the latter has a sharper dependence of $O({\varepsilon})$. This improvement is made possible by using the known bounds on the random variable to inform the trimming process. We will exploit this idea in our algorithm design in the next section. 
\end{remark}
\section{The Proposed Algorithm}
\label{sec:braceQ}
The standard model-free \(Q\)-learning algorithm~\cite{watkins1992q} and its robust counterparts in~\cite{maity2024robust, maity2025corruption} have the following feature in common: they update the \(Q\)-table at \emph{every} time-step as soon as a new data sample arrives. Since an update is made using just a \emph{single} sample, the update direction has a high variance, making it more vulnerable to corruption. Thus, as alluded to in Sections~\ref{sec:Intro} and~\ref{sec:Problem Formulation}, the resulting guarantees in~\cite{maity2025corruption} are loose, and fail to match the fundamental lower bound. To mitigate this challenge, we depart fundamentally from prior approaches and develop a novel, epoch-based robust variant of asynchronous \(Q\)-learning that uses \emph{online batching} of data to generate robust estimates under adversarial feedback. Our algorithm, \textbf{Batch Robust Asynchronous \(Q\)-Learning}, abbreviated as \textcolor{winered}{\texttt{BR-Async-Q}}, is formally described in Algorithm~\ref{algo:epoch_joint}. At a high level, \textcolor{winered}{\texttt{BR-Async-Q}} partitions the interaction horizon \(T\) into \(K\) epochs (batches) of length \(H\) each, with epoch \(k\) (where $k=0, 1, \ldots$) given by \(\mc I_k:=\{kH,\ldots,(k+1)H-1\}\). From the data collected within each epoch $k$, we then generate a robust empirical estimate $\widehat{\mc{T}}_k$ of the Bellman optimality operator $\mc{T}$ by invoking Algorithm~\ref{alg:trimmed_huber} as a sub-routine.  Crucially, our main insight here is that choosing the epoch length $H$ to be large enough leads to a low variance in the estimate $\widehat{\mc{T}}_k$. This turns out to be the key to achieving tight bounds. Using $\widehat{\mc{T}}_k$, a  \emph{single} robust update to the \(Q\)-table is made at the end of the $k$-th epoch. Later, we shall see that $K=\mc{O}(\log(T))$ epochs suffice. Thus, the essence of our approach lies \emph{in  making infrequent \(Q\)-table updates using robust, low-variance estimates of the Bellman operator}. We now provide the details of our approach.  

\begin{algorithm}[h]
\caption{\textcolor{winered}{\texttt{BR-Async-Q}}}
\label{algo:epoch_joint}
\begin{algorithmic}[1]
    \State \textbf{Input:} Step size $\alpha$,  corruption levels $[\varepsilon_{\mc Y},\varepsilon_{\mc R}]$, confidence level $\delta$, total iterations $T=KH$, epoch length \(H\), and number of epochs \(K\). 
    \State Initialize $Q_0=0$ 
    \For{epoch $k=0,\ldots,K-1$}
        \State Initialize $\{\mc D_k(s,a), \mc Y_k(s,a)\}=\emptyset$, \(\forall\) $(s,a)\in\mc S\times\mc A$.
        \For{$t\in\mc I_k$}
            \State Observe $(s_t,a_t)$ and feedback $(\widetilde r_t(s_t,a_t),\widetilde s_{t+1})$.
            \State {\texttt{Append}} $\mc D_k(s_t,a_t) \leftarrow \widetilde r_t(s_t,a_t)$, $\mc Y_k(s_t,a_t) \leftarrow {\max}_{a\in\mc A} Q_k(\widetilde s_{t+1},a)$.
        \EndFor
        \State {\texttt{Compute}} {robust estimates} following~\eqref{eq:matrix_format}.
        \State {\texttt{Compute}} $(\widehat{\mc T}_k Q_k)(s,a)$ via~\eqref{eq:empirical_operator_def} for all \((s,a)\).
        {\texttt{Update}} $Q_{k+1}$ as per~\eqref{eq:epoch_relaxed_update}.
    \EndFor
\end{algorithmic}
\end{algorithm}
$\bullet $ \textbf{Online Batching and Robust Bellman Operator Estimation.} Fix an epoch $k\in [K]$ with index set $\mc I_k$. During epoch $k$, \textcolor{winered}{\texttt{BR-Async-Q}} maintains two dynamic data-sets for each state-action pair $(s,a) \in \mc{S} \times \mc{A}$:
\begin{equation}\label{eq:epoch_R_def}\medmath{
\begin{aligned}
\mc D_k(s,a) 
&:= \big\{\,\widetilde r_t(s_t,a_t)\ :\ t\in\mc I_k,\ (s_t,a_t)=(s,a)\,\big\}, \\
\mc Y_k(s,a) 
&:= \big\{\,{\max}_{a'\in\mc A}Q_k(\widetilde s_{t+1},a')\ \hspace{-1mm}:\ \hspace{-1mm}t\in\mc I_k,\ \hspace{-1mm} (s_t,a_t)=(s,a)\,\big\}.
\end{aligned}}
\end{equation}

\begin{figure}[t]
\begin{center}
\includegraphics[scale=0.50]{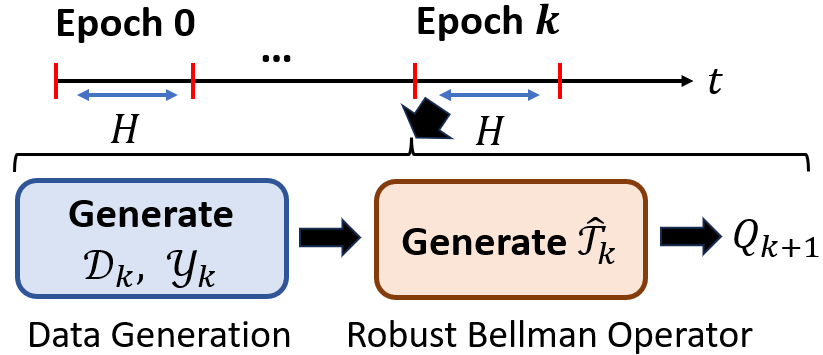}
\vspace{-4mm}
\end{center}
\caption{Illustration of \textcolor{winered}{\texttt{BR-Async-Q}} which runs in epochs of length $H$. During each epoch $k$, the learner generates the data sets $\mc{D}_k$ and $\mc{Y}_k$ in~\eqref{eq:epoch_R_def}, constructs a robust empirical estimate $\widehat {\mc{T}}_k$ of the Bellman optimality operator using such data, and then updates the \(Q\)-table to $Q_{k+1}$ using $\widehat {\mc{T}}_k$.}
\label{fig:Algo}
\vspace{-5mm}
\end{figure}
\noindent For each state-action pair $(s,a)\in\mc{S}\times\mc{A}$, let
\begin{equation}\label{eq:n-k} 
    N_k(s,a):=\sum_{t\in\mc{I}_k}\mathbf{1}\!\left\{(s_t,a_t)=(s,a)\right\}
\end{equation}
denote the number of visits to $(s,a) \in \mc{S} \times \mc{A}$ during epoch $k \in [K]$. Since the observations are collected asynchronously over a finite-length epoch, one may encounter degenerate scenarios where some state--action pairs are not visited during a given epoch. Consequently, for each pair $(s,a)$ and each epoch $k$, Algorithm~\ref{algo:epoch_joint} must account for the following two cases:  $N_k(s,a) > 0$ and $N_k(s,a) = 0$. 

\textcolor{winered}{\smallsquare}~\textbf{Case 1}~\((N_k(s,a)>0)\) \textbf{:} The first data-set $\mc D_k(s,a)$ contains the possibly corrupted reward observations for state-action pair $(s,a)$ collected during epoch $k$, where rewards are generated as per~\eqref{eqn:obs_corruption_model}. Similarly, the second dataset $\mc Y_k(s,a)$ contains the one-step look-ahead (bootstrapped) values evaluated using the \emph{frozen} iterate $Q_k$ (from the beginning of epoch $k$), and a next state observation generated as per~\eqref{eqn:obs_corruption_model}. Note that under our sampling model, where $s_t$ is generated independently from $\pi$ at each time-step, the clean samples in each of the above data sets are independent and identically distributed. Furthermore, the fractions of corrupted samples in $\mc D_k(s,a)$ and $\mc Y_k(s,a)$ are $\varepsilon_{\mc{R}}$ and $\varepsilon_{\mc{Y}}$, respectively, on average. Based on these observations, we use the trimmed-mean estimators \({\texttt{TRIM}}_{{\textup{H}}}\) and \({\textup{\texttt{TRIM}}}_{{\textup{B}}}\) from Section~\ref{sec:TrimmedMean} to estimate for every \((s,a)\in\mc S\times\mc A\), the clean reward mean
\(\medmath{R(s,a):=\mathbb{E}[r(s,a)]}\),
and the clean epoch-\(k\) bootstrap mean
\(\medmath{\mu_k(s,a):=\mathbb{E}_{s'\sim \mc P(\cdot\mid s,a)}\!\left[\max_{a'\in\mc A}Q_k(s',a')\right]}\) as follows: 
\begin{equation}\label{eq:matrix_format}
\begin{aligned}
\widehat R_k(s,a) &\leftarrow {\texttt{TRIM}}_{{\textup{H}}}~\![\mc D_k(s,a),\,\varepsilon_{\mc R},\,\delta_1],\\
\widehat \mu_k(s,a) &\leftarrow {\textup{\texttt{TRIM}}}_{{\textup{B}}}~\![\mc Y_k(s,a), (-B, B)],
\end{aligned}
\end{equation}
where \(B = 3\mc{C}\tilde{\sigma}/(1-\gamma)\), $\delta_1 = \delta/(4|\mc{S}||\mc{A}|T)$, and \(\mc{C}\) is the universal constant in~\eqref{eq:trim_var_bound}; the choice of \(B\) is justified in Lemma~\ref{lem:bounded}. Had we not used this threshold $B$ in the trimming operation for estimating $\widehat \mu_k(s,a)$, our final bounds would have featured a $O(\sqrt{\varepsilon_{\mc{Y}}})$ term. Using the threshold information allows us to instead obtain a sharper dependence of $O({\varepsilon_{\mc{Y}}})$ in our main convergence bound in~\eqref{eq:main-final-complete}. 

\textcolor{winered}{\smallsquare}~\textbf{Case 2}~\((N_k(s,a) = 0)\) \textbf{:} If a state--action pair $(s,a) \in \mc{S} \times \mc{A}$ is not visited during epoch $k \in [K]$, i.e., if $N_k(s,a)=0$, then the corresponding datasets $\mathcal{D}_k(s,a)$ and $\mathcal{Y}_k(s,a)$ are empty, and hence, the robust estimators in~\eqref{eq:matrix_format} cannot be evaluated. In this case, we set the corresponding estimates to zero by convention, i.e., $\widehat{R}_k(s,a)=\widehat{\mu}_k(s,a)=0$.

$\bullet$ \textbf{Stabilizing the Update via Clipping.}
Although the robust estimators in~\eqref{eq:matrix_format} control the reward and lookahead targets with high probability, they can still fail on rare events. In particular, a rare failure of the robust estimator can yield an extreme value of $\widehat R_k(s,a)$ or $\widehat\mu_k(s,a)$ defined in~\eqref{eq:matrix_format}, which, in turn, may destabilize the \(Q\)-table iterates. To guard against these low-probability but high-impact deviations, we apply a simple clipping step that uniformly bounds the empirical Bellman operator, ensuring that updates remain bounded even on extreme events. For this, our clipping threshold will be guided by the guarantees from Theorem~\ref{thm:trim_huber_both}. However, as apparent from the statement of this theorem, we need the number of samples fed to the robust estimator in Algorithm~\ref{alg:trimmed_huber} to exceed a minimum amount. Translated to the estimators in~\eqref{eq:matrix_format}, we need enough visits to each state-action pair $(s,a) \in \mc{S} \times \mc{A}$ in each epoch $k$. Accordingly, in Lemma~\ref{lem:counts_all_epochs_TAC_final}, we will show that as long as
\begin{equation}\label{eqn:condition_1}
H \ \ge \Big\lceil \left(20/\lambda_{\texttt{min}}\right)\cdot\log~\left(8 T|\mc S||\mc A|/\delta\right)\Big\rceil,
\end{equation}
a standard Bernstein bound ensures that in each epoch $k$, the number of visits to each state-action pair $(s,a)$ is at least $(1/2) \lambda_{\texttt{min}}H$, with high probability. Recall here that $\lambda_{\texttt{min}} >0$ is the minimum state-action pair visitation probability. On this event, for each pair $(s,a)$ and epoch $k$, $N_k(s,a) = |\mc D_k(s,a)| = |\mc Y_k(s,a)| \geq (1/2)  \lambda_{\texttt{min}}H$, which is exactly the regime in which Theorem~\ref{thm:trim_huber_both} provides meaningful high-probability control. For epoch length $H$ satisfying~\eqref{eqn:condition_1}, informed by the bound~\eqref{eq:trim_var_bound} from Theorem~\ref{thm:trim_huber_both}, we construct a clipping threshold as follows: 
\begin{equation}\label{eqn:Gt}
\texttt{G}_k \;:=\; \mathcal{C}\,\bar{\sigma}\left(\sqrt{\frac{2\log(8/\delta_1)}{\,\lambda_{\texttt{min}}\,H}}+\sqrt{{\varepsilon_{\mc R}}}\right)+\tilde{\sigma},
\end{equation}
where recall that $\tilde{\sigma}:=\max\{\bar{R},\bar{\sigma}\}$, $\mathcal{C}$ is the universal constant from Theorem~\ref{thm:trim_huber_both}, and $\delta_1 = \delta/(4|\mc{S}||\mc{A}|T)$. Next, define the clipping operator as ${\texttt{clip}}_{[{-\texttt{G}_k,\texttt{G}_k}]}(x):=\min\{ \max\{x,{-\texttt{G}_k}\},\,{\texttt{G}_k}\}, \forall x \in \mathbb{R}.$ Based on the estimates in~\eqref{eq:matrix_format} and the clipping radius in~\eqref{eqn:Gt}, we construct a robust empirical estimate $\widehat{\mc T}_k$ of the Bellman optimality operator $\mc{T}$ at the end of each epoch $k$, as follows: 
\begin{equation}\label{eq:empirical_operator_def}
(\widehat{\mc T}_k Q_k)(s,a)\hspace{-0.8mm}:={\texttt{clip}}_{[{-\texttt{G}_k,\texttt{G}_k}]}\left[\widehat R_k(s,a)\right]
\hspace{-1mm}+\hspace{-0.8mm}\gamma\,\widehat\mu_k(s,a),
\end{equation}
$\forall (s,a) \in \mc{S} \times \mc{A}$. The \(Q\)-table at the end of epoch $k$ is then updated for each \((s,a) \in \mc{S} \times \mc{A}\) as 
\begin{equation}
Q_{k+1}(s,a)
= 
(1-\alpha)Q_k(s,a)+\alpha\,\widehat{\mc T}_k Q_k(s,a).
\label{eq:epoch_relaxed_update}
\end{equation}

Clipping \(\widehat R_k(s,a)\) at the threshold in~\eqref{eqn:Gt} serves a two-fold purpose. First, it controls the magnitudes of the rewards used for updating $Q_k$, protecting against heavy-tailed reward distributions and reward corruptions. Second, it also indirectly controls the look-ahead estimate \(\widehat\mu_k(s,a)\), since the latter is computed from samples of the form \({\max}_{a'\in\mc A}Q_k(\widetilde s_{t+1},a')\), whose size is determined by \(\|Q_k\|_\infty\). As Lemma~\ref{lem:bounded} reveals, clipping the rewards ensures that \(\|Q_k\|_\infty\) remains uniformly bounded. This completes our description of \textcolor{winered}{\texttt{BR-Async-Q}}.

\begin{remark}
It is instructive to note that the ``asynchronous" aspect in our setup comes from the sampling model, where only one state-action pair is observed per time-step, as opposed to a synchronous sampling model~\cite{kearns, even2003learning, sidford} which assumes that all state-action pairs can be accessed at all times. That said, from~\eqref{eq:epoch_relaxed_update}, notice that although data arrives asynchronously, updates to the Q-table are made synchronously, once the learner has seen each state-action pair sufficiently often within each epoch. This strategy of acquiring enough samples before making updates is precisely what allows us to obtain order-optimal bounds.
\end{remark}

\section{Main Result and Discussion}
\label{sec:main_result}
In this section, we provide our main convergence result for  \textcolor{winered}{\texttt{BR-Async-Q}}. Define \(Q_k := [Q_k (s,a)]_{(s,a) \in \mc{S} \times \mc{A}}\), and $ e_k := Q_k - Q^*, \forall k \geq 0$. We then have the following result.
\label{sec:result}
\begin{theorem}
\label{thm:main theorem 1}
Suppose Assumption~\ref{ass:ergodic} holds. Given $\delta\in(0,1)$ and $T$, set the number of epochs  to  $\medmath{K = \left\lceil 2\log T/(1-\gamma)\right\rceil}$, and suppose $\medmath{T \ge K \left\lceil (64/\lambda_{\texttt{min}})  \log\!\left(8 T|\mc S||\mc A|/\delta\right) \right\rceil}$. Then, the output of Algorithm~\ref{algo:epoch_joint} with step-size $\medmath{\alpha:=\log T/((1-\gamma)K)}$ satisfies the following bound with probability at least $1-\delta$:
\begin{equation}
\label{eq:main-final-complete}\medmath{
\begin{aligned}
\lVert e_K \rVert_{\infty}
\ \le\
\frac{\lVert e_0 \rVert_{\infty}}{T}
\;&+\;
\mc{{O}}\left(\frac{\tilde\sigma}{(1-\gamma)^{\frac{5}{2}}}\right)\left(
\sqrt{\frac{ \log(T) \, \log\!\left(T|\mc S||\mc A|/\delta\right)}{\lambda_{\min}T}}
\right)\\
&+ \underbrace{\left[\mc{O}\left(\tilde{\sigma}\right)\left(
\frac{\textcolor{winered}{\varepsilon_{\mc Y}}}{(1-\gamma)^2}\right) \vee \mc{O}\left(\bar{\sigma}\right)\left(\frac{\sqrt{\textcolor{winered}{\varepsilon_{\mc R}}}}{1-\gamma}
\right)\right]}_{\psi},\end{aligned}}
\end{equation}where \(a \vee b = {\max}\left(a,b\right)\).
\end{theorem}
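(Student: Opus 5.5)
We will argue on a single high-probability ``good event'' $\mc E$ and then unroll a deterministic contraction recursion on $\mc E$. The event $\mc E$ is the intersection of two families. First, by Lemma~\ref{lem:counts_all_epochs_TAC_final} and the condition $T\ge K\lceil (64/\lambda_{\texttt{min}})\log(8T|\mc S||\mc A|/\delta)\rceil$, which implies~\eqref{eqn:condition_1} for $H=T/K$, every pair is visited at least $N_k(s,a)\ge \lambda_{\texttt{min}}H/2$ times in every epoch. Second, for every epoch $k$ and pair $(s,a)$, both trimmed-mean estimators in~\eqref{eq:matrix_format} satisfy the guarantees of Theorem~\ref{thm:trim_huber_both} with confidence $\delta_1=\delta/(4|\mc S||\mc A|T)$. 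For this we condition on the visit pattern of epoch $k$ and on $Q_k$, which is measurable with respect to the data of earlier epochs; the clean samples inside epoch $k$ are then i.i.d. under the i.i.d. data model, and the Huber indicators are independent of them. A union bound over $K|\mc S||\mc A|$ instances gives $\mathbb P(\mc E)\ge 1-\delta$. The lower bound on $N_k$ also ensures $\delta_1\ge 8e^{-N_k/2}$, so Theorem~\ref{thm:trim_huber_both} applies.

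On $\mc E$ we first invoke Lemma~\ref{lem:bounded} to get $\|Q_k\|_\infty\le B/3$ for all $k$, so that the clean bootstrap samples lie in $[-B,B]$. Clipping is what drives this: $|\texttt{clip}(\widehat R_k)|\le \texttt{G}_k\lesssim \mc C\tilde\sigma$, and an induction on $\|Q_{k+1}\|_\infty\le(1-\alpha)\|Q_k\|_\infty+\alpha(\texttt{G}_k+\gamma\|Q_k\|_\infty)$ closes it. Next we bound the per-epoch operator error $\Delta_k:=\widehat{\mc T}_kQ_k-\mc TQ_k$, using $(\mc TQ_k)(s,a)=R(s,a)+\gamma\mu_k(s,a)$.

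\emph{Reward part.} By~\eqref{eq:trim_var_bound}, $|\widehat R_k-R|\le \mc C\bar\sigma(\sqrt{\varepsilon_{\mc R}}+\sqrt{2\log(8/\delta_1)/(\lambda_{\texttt{min}}H)})$. Hence $|\widehat R_k|\le \texttt{G}_k$, since $|R|\le\bar R\le\tilde\sigma$, and clipping is inactive.

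\emph{Bootstrap part.} By~\eqref{eq:trim_bounded_refined} with range $2B$, $|\widehat\mu_k-\mu_k|\le 2\bar c B(\varepsilon_{\mc Y}+\sqrt{2\log(8/\delta_1)/(\lambda_{\texttt{min}}H)})$. Here the averaging is over $N_k$ samples, so we must rescale $1/N_k\le 2/(\lambda_{\texttt{min}}H)$. There is a subtlety here: the $\texttt{TRIM}_{\textup B}$ normalization divides by $N_k$, and the clean samples being bounded by $B$ uses Lemma~\ref{lem:bounded}.

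Combining the two parts gives, with $B\asymp\tilde\sigma/(1-\gamma)$,
\[
\|\Delta_k\|_\infty\le \beta:=\mc O\!\Big(\tfrac{\tilde\sigma}{1-\gamma}\Big)\sqrt{\tfrac{\log(T|\mc S||\mc A|/\delta)}{\lambda_{\texttt{min}}H}}+\mc O(\bar\sigma)\sqrt{\varepsilon_{\mc R}}+\mc O\!\Big(\tfrac{\tilde\sigma}{1-\gamma}\Big)\varepsilon_{\mc Y}.
\]

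Then we use $Q^*=\mc TQ^*$ and the $\gamma$-contraction of $\mc T$ in $\ell_\infty$:
\[
e_{k+1}=(1-\alpha)e_k+\alpha(\mc TQ_k-\mc TQ^*)+\alpha\Delta_k,\qquad \|e_{k+1}\|_\infty\le(1-\alpha(1-\gamma))\|e_k\|_\infty+\alpha\beta .
\]
Unrolling gives $\|e_K\|_\infty\le(1-\alpha(1-\gamma))^K\|e_0\|_\infty+\beta/(1-\gamma)$. With $\alpha(1-\gamma)K=\log T$, the first term is at most $e^{-\log T}\|e_0\|_\infty=\|e_0\|_\infty/T$. We also need $\alpha\le 1$, which holds since $K\ge 2\log T/(1-\gamma)$ gives $\alpha\le1/2$. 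Substituting $H=T/K$ with $K=\mc O(\log T/(1-\gamma))$ turns the statistical term into $\tfrac{\tilde\sigma}{(1-\gamma)^2}\sqrt{\tfrac{\log T\,\log(\cdot)}{(1-\gamma)\lambda_{\min}T}}$, which is the $(1-\gamma)^{-5/2}$ term. The corruption terms become $\tilde\sigma\varepsilon_{\mc Y}/(1-\gamma)^2$ and $\bar\sigma\sqrt{\varepsilon_{\mc R}}/(1-\gamma)$, and their sum is bounded by twice their maximum.

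The main obstacle is the conditioning and measurability needed to apply Theorem~\ref{thm:trim_huber_both} to $\mc Y_k(s,a)$. The samples depend on $Q_k$, which must be frozen, i.e., determined by earlier epochs, and the sample size $N_k$ is random. We would handle this by conditioning on $\mc F_{kH}$ and on the visitation sequence of epoch $k$, and by applying the theorem for each fixed sample size $n\ge\lambda_{\texttt{min}}H/2$. The adversary's adaptivity is harmless because the clean samples and the Bernoulli corruption indicators remain independent, and both estimators tolerate arbitrary values at the corrupted indices. Lemma~\ref{lem:bounded} must also be established on $\mc E$ itself, not unconditionally, so that the argument is not circular. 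We would do this by inducting on $k$ simultaneously for the boundedness of $Q_k$ and for the validity of the estimators.
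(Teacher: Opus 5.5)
Your proposal is correct and follows essentially the same route as the paper. The shared steps are bounded iterates via clipping; trimmed-mean guarantees for $\widehat R_k$ and $\widehat\mu_k$, obtained by conditioning on the past and on a fixed visit count $j\ge \lambda_{\min}H/2$ on the event $\mathcal{V}$; the contraction recursion with $(1-\alpha(1-\gamma))^K\le 1/T$; and the substitution $H=T/K$. Your observation that clipping is simply inactive on the good event is an equivalent alternative to the paper's projection argument.

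One small correction: Lemma~\ref{lem:bounded} gives $\|Q_k\|_\infty\le B$, not $B/3$. It also holds for every realization, because the reward clip and $\texttt{TRIM}_{\textup{B}}$ with cutoffs $(-B,B)$ bound the outputs deterministically. So no joint induction on $\mathcal{E}$ is needed to avoid circularity.
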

\textbf{Discussion of Theorem~\ref{thm:main theorem 1}.}
Theorem~\ref{thm:main theorem 1} gives a finite-time, high-probability \(\ell_\infty\)-error bound for Algorithm~\ref{algo:epoch_joint}. The bound in~\eqref{eq:main-final-complete} has two key components: a dominant statistical error term capturing nominal behavior, and a corruption-induced bias term. We discuss each of them below. 

$\bullet$  \textbf{Statistical term.}
Up to logarithmic factors, the dominant statistical term in~\eqref{eq:main-final-complete} scales as
\(
\tilde{\mc O}\!\left(\tilde\sigma(1-\gamma)^{-5/2}/\sqrt{\lambda_{\min}T}\right).
\) While our dependence on $\gamma$ matches existing bounds~\cite{Waiwright, Qu}, our dependence on the minimum visitation probability $\lambda_{\min}$ is, in fact, an improvement over that in~\cite{Qu} where the corresponding rate is $\tilde{\mc O}\!\left(\tilde\sigma(1-\gamma)^{-5/2}/(\lambda_{\min} \sqrt{T})\right)$. This improvement can be attributed to our batching strategy which leads to synchronous \emph{variance-reduced} updates.  

$\bullet$ \textbf{Corruption-induced bias.} The non-vanishing term $\psi$ in~\eqref{eq:main-final-complete} captures the effect of corruption-induced bias. Crucially, this term does not scale with the magnitude of corruption, but rather only the small corruption probabilities $\varepsilon_{\mc{Y}}$ and $\varepsilon_{\mc{R}}$. The significance of Theorem~\ref{thm:main theorem 1} is two-fold. It is the first result to show that $Q^*$ can be accurately estimated up to a small bias term, despite joint corruption of state and reward observations. Second, when only rewards are corrupted, i.e., when $\varepsilon_{\mc{Y}}=0$, the bias term $\mc{O}\left(\bar{\sigma}\right)\left(\frac{\sqrt{\textcolor{winered}{\varepsilon_{\mc R}}}}{1-\gamma}
\right)$ exactly matches the fundamental lower bound derived in~\cite{maity2025corruption}, sharpening the previous known best bounds under reward corruption. Specifically, the corruption-induced term in~\cite{maity2025corruption} scales inversely with $\lambda_{\min}$, suggesting that infrequent visitation of certain state-action pairs can amplify the effects of corruption. Our current work is the first to reveal that via variance-reduction, such an amplification can be avoided under the Huber model, where the adversary can only control \emph{what} to inject, but not \emph{when} to inject corruption signals; the latter is controlled by the exogenous Bernoulli processes defined in~\eqref{eqn:obs_corruption_model}. Overall, our main findings contribute to a finer understanding of \(Q\)-learning under adversarial feedback. 
\section{Simulation Results}
We evaluate \textcolor{winered}{\texttt{BR-Async-Q}} on a classic grid-world environment. The environment is an MDP with $|\mc S|=100$ states, $|\mc A|=40$ actions, and discount factor $\gamma=0.5$. The mean reward upper bound for all $(s,a)\in\mc S\times\mc A$ satisfies $\bar{R}\in(0,10]$, and the reward variances are uniformly bounded as $\bar{\sigma}^2\le 10$. Setting \(\varepsilon = \varepsilon_{\mc{Y}} = \varepsilon_{\mc{R}}\), we compare the performance of our proposed algorithm with vanilla \(Q\)-learning in Fig.~\ref{fig:sim}. The plots reveal the vulnerability of standard \(Q\)-learning to corrupted feedback, and show that \textcolor{winered}{\texttt{BR-Async-Q}} continues to guarantee convergence to a neighborhood of $Q^*.$ 

\begin{figure}[h]
\begin{center}
\begin{tabular}{cc}
   \hspace{-5 mm}\includegraphics[scale=0.22]{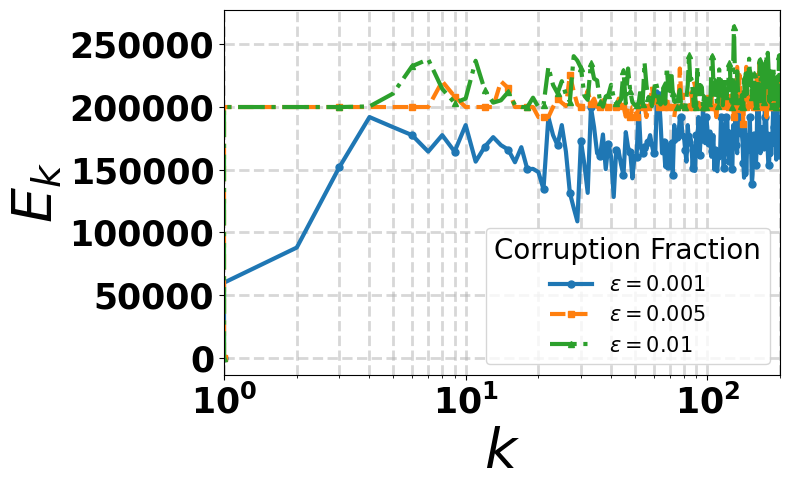}&\hspace{-4 mm}\includegraphics[scale=0.22]{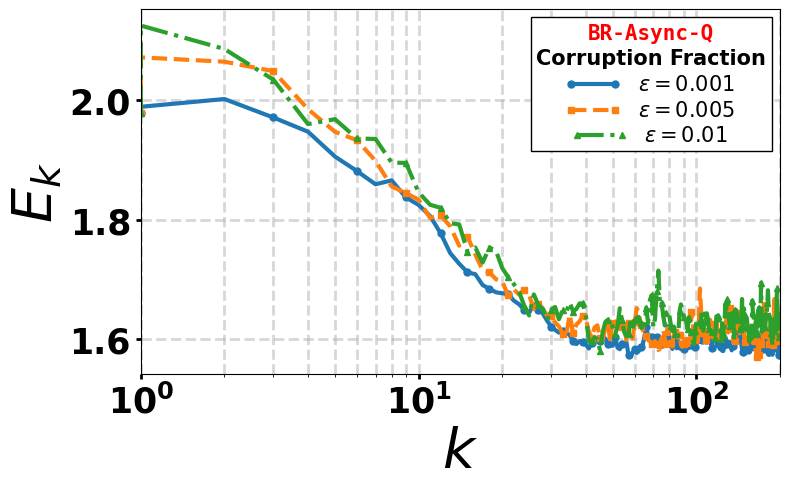}
    \end{tabular}
    \begin{tabular}{cc}
   \hspace{-5 mm}\includegraphics[scale=0.22]{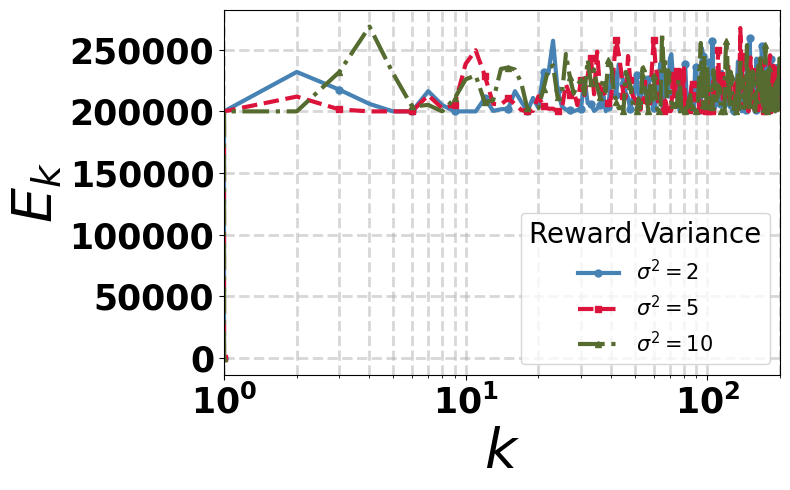}&\hspace{-4 mm}\includegraphics[scale=0.22]{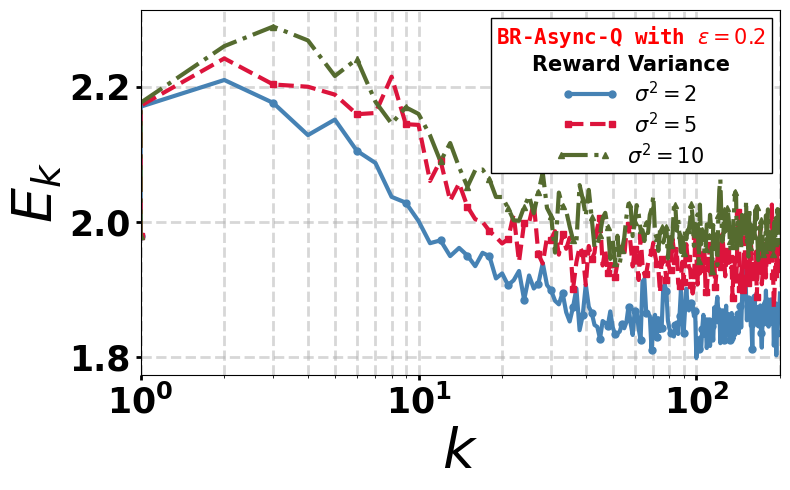}
    \end{tabular}
\vspace{-4mm}
\end{center}
\caption{\textbf{(Top Left)} $\ell_\infty$ error $E_k=\lVert Q_k - Q^\star \rVert_\infty$ for \texttt{\textcolor{mygreen}{Vanilla-Q}} under the Huber-contaminated model in~\eqref{eqn:obs_corruption_model}, with $\varepsilon\in\{0.001,0.005,0.01\}$, reward-noise variance $\sigma^2=5$, a $-10^6$ biasing attack on corrupted reward samples, and next-state corruption in which observed next states are replaced by arbitrary states, modeled by uniform resampling over $\mc S$. \textbf{(Top Right)} $E_k$ for \textcolor{winered}{\texttt{BR-Async-Q}} under the same corruption levels, noise statistics, and attack. \textbf{(Bottom Left)} $E_k$ for \textcolor{mygreen}{\texttt{Vanilla-Q}} with $\varepsilon=0.2$ and reward-noise variance $\sigma^2\in\{2,5,10\}$. \textbf{(Bottom Right)} $E_k$ for \textcolor{winered}{\texttt{BR-Async-Q}} with $\varepsilon=0.2$ and reward-noise variance $\sigma^2\in\{2,5,10\}$. Each curve in Fig.~\ref{fig:sim} reports the average over $100$ independent runs.}
\label{fig:sim}
\end{figure}
\vspace{-2 mm}
We next consider the same MDP with asymmetric next-state and reward corruption rates $(\varepsilon_{\mc Y},\varepsilon_{\mc R})$ i.e., \(\varepsilon_{\mc Y} \neq \varepsilon_{\mc R}\). Corrupted rewards are replaced by $-10^8$, while corrupted next states are sampled uniformly from $\mc S$. As shown in Fig.~\ref{fig:sim-1}, vanilla \(Q\)-learning is highly vulnerable to such corruption, whereas \textcolor{winered}{\texttt{BR-Async-Q}} remains stable and converges to a neighborhood of $Q^\star$.
\begin{figure}[h]
\begin{center}
\begin{tabular}{cc}
   \hspace{-5 mm}\includegraphics[scale=0.25]{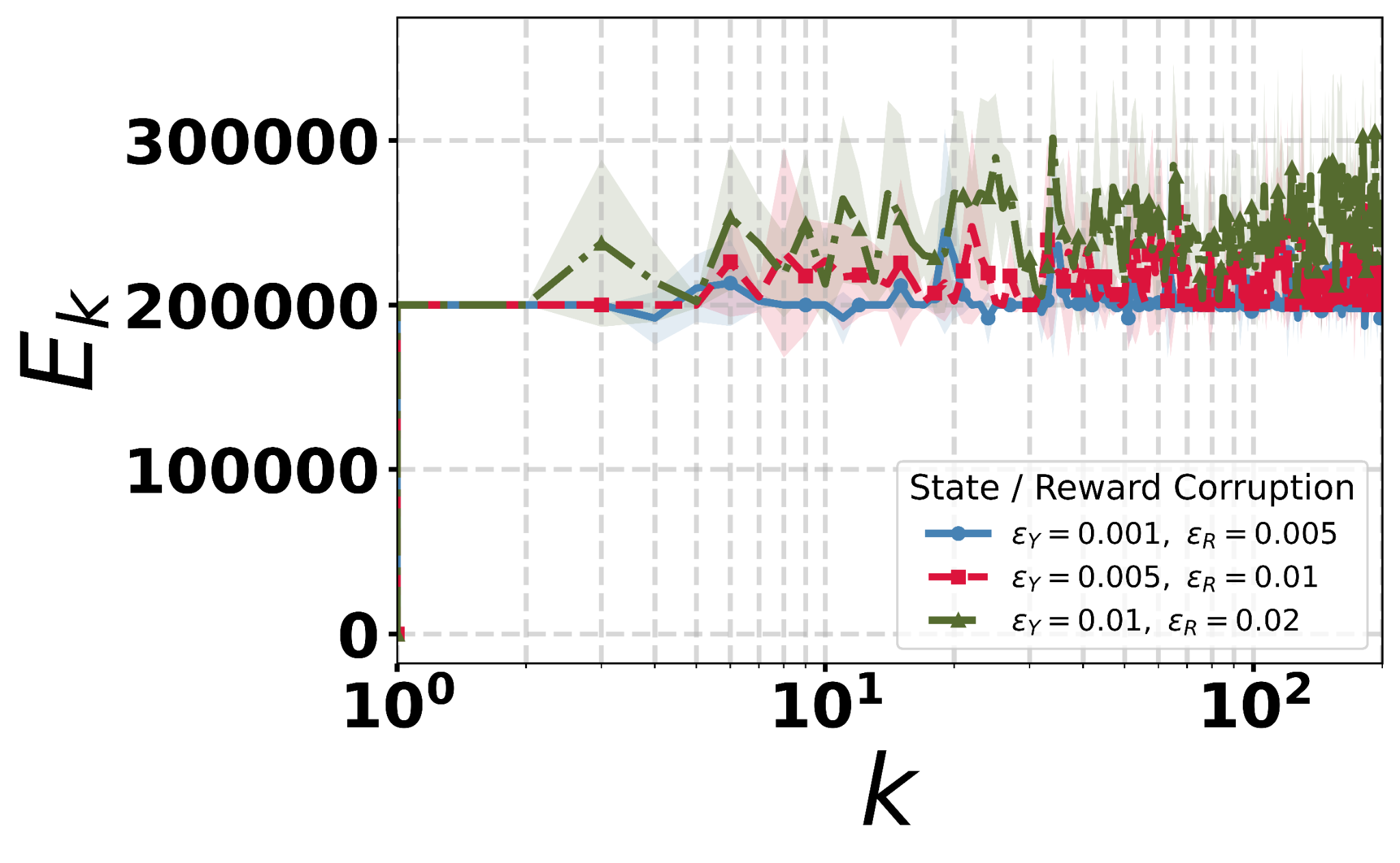}&\hspace{-4 mm}\includegraphics[scale=0.25]{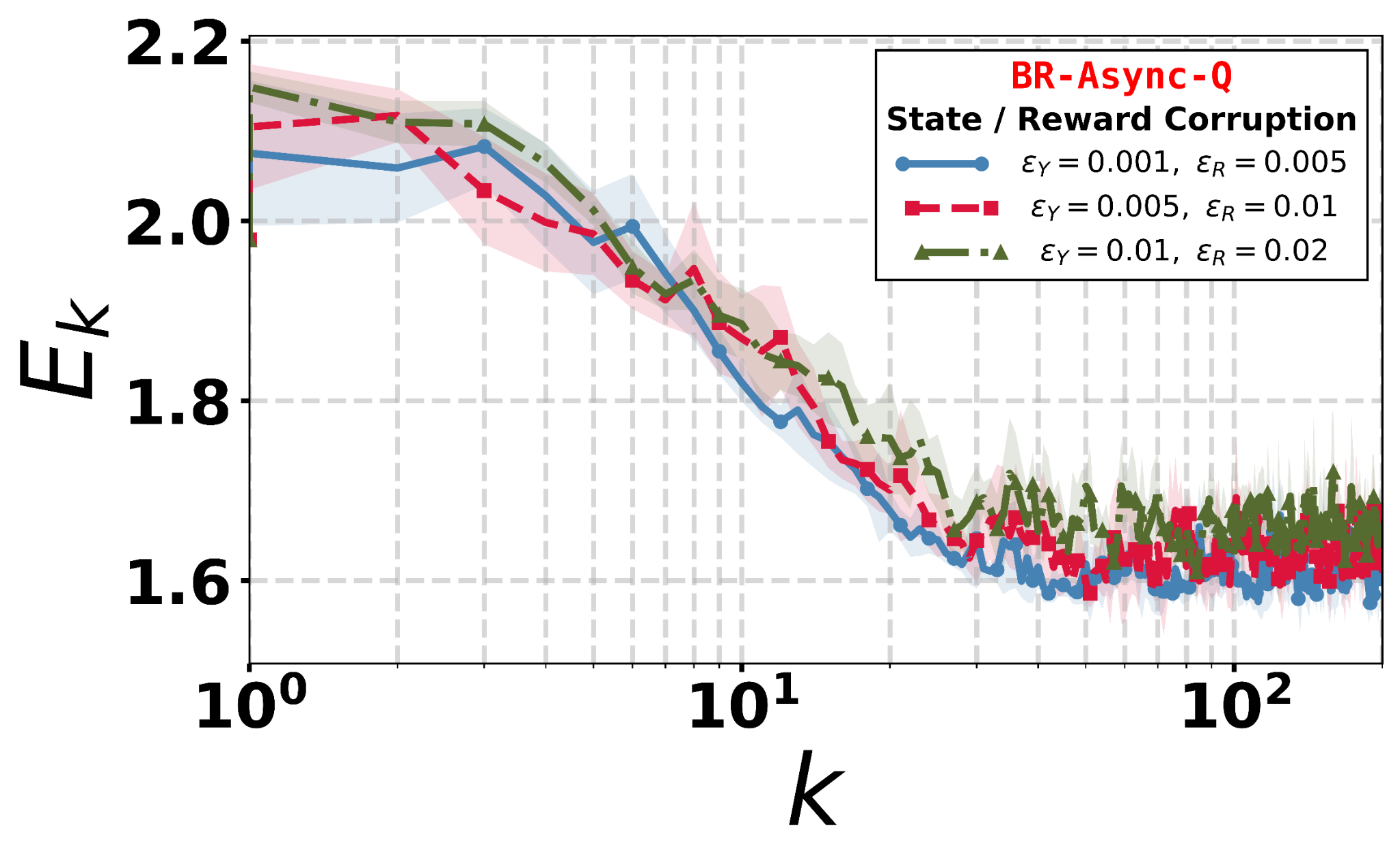}
    \end{tabular}
\vspace{-4mm}
\end{center}
\caption{\textbf{(Left)} $\ell_\infty$ error $E_k=\lVert Q_k-Q^\star\rVert_\infty$ for
\textcolor{mygreen}{\texttt{Vanilla-Q}} under the Huber-contaminated model in~\eqref{eqn:obs_corruption_model}, with distinct next-state and reward corruption fractions
$(\varepsilon_Y,\varepsilon_R)\in \{(0.001,0.005),(0.005,0.01),(0.01,0.02)\}$. Corrupted reward samples are subjected to a $-10^8$ biasing attack, while corrupted next-state observations are replaced by states sampled uniformly from $\mc S$. \textbf{(Right)} The corresponding error $E_k$ for \textcolor{winered}{\texttt{BR-Async-Q}} under the same corruption levels and attack model. Each curve in Figure~\ref{fig:sim-1} reports the average over 100 independent runs, and the shaded region represents one standard deviation around the mean.}
\label{fig:sim-1}
\end{figure}

Lastly, we examine the effect of the epoch length \(H\) on \textcolor{winered}{\texttt{BR-Async-Q}} under two asymmetric corruption settings; see Figure~\ref{fig:sim-3}.
\begin{figure}[h]
\begin{center}
\begin{tabular}{cc}
   \hspace{-5 mm}\includegraphics[scale=0.22]{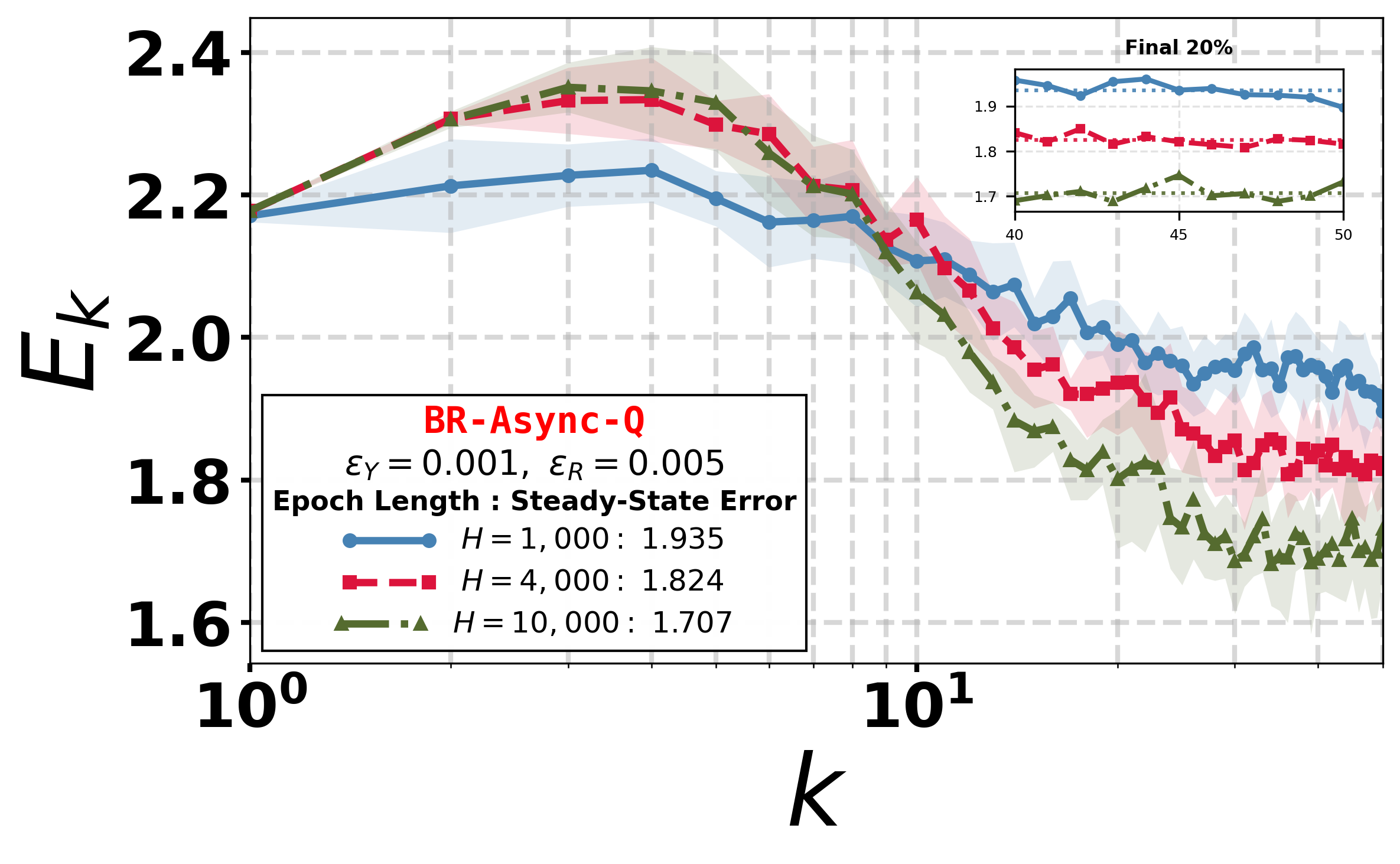}&\hspace{-4 mm}\includegraphics[scale=0.22]{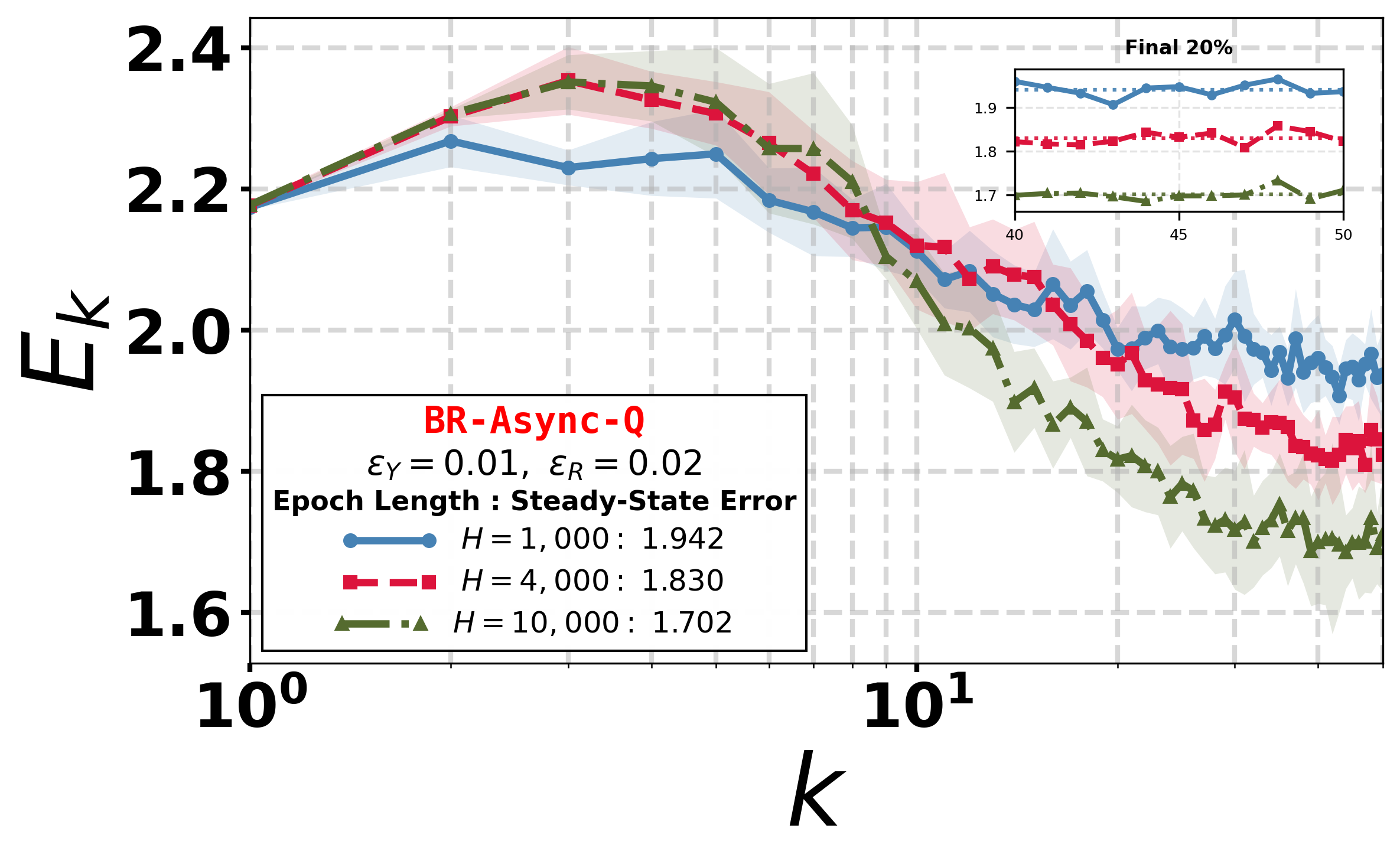}
    \end{tabular}
\vspace{-4mm}
\end{center}
\caption{\textbf{(Left)} The $\ell_\infty$ error $E_k=\lVert Q_k-Q^\star\rVert_\infty$ for \textcolor{winered}{\texttt{BR-Async-Q}} under the asymmetric corruption levels $(\varepsilon_Y,\varepsilon_R)=(0.001,0.005)$, with epoch length $H\in\{1000,4000,10000\}$ and a fixed number of epochs $K=50$. \textbf{(Right)} The corresponding error under the higher corruption levels $(\varepsilon_Y,\varepsilon_R)=(0.01,0.02)$. In both panels, corrupted rewards are replaced by $-10^6$, corrupted next states are sampled uniformly from $\mc S$, and the reward-noise variance is $\sigma^2=5$. Each curve in Figure~\ref{fig:sim-3} reports the average over 50 independent runs, and the shaded region represents one standard deviation around the mean.}
\label{fig:sim-3}
\end{figure}

\section{Main Convergence Analysis}
In this section, we provide the detailed proof of Theorem~\ref{thm:main theorem 1}. Essentially, this boils down to controlling the error between the estimated Bellman operator $\widehat{\mc T}_k$ in~\eqref{eq:empirical_operator_def} and the true operator $\mc{T}$. To control this error, we will derive high-probability bounds on the reward estimation error $\Vert \widehat R_k -R \Vert_{\infty}$ in Lemma~\ref{lem:R_concentration_all} and look-ahead estimation error $\Vert \widehat \mu_k - \mu \Vert_{\infty}$ in Lemma~\ref{lem:mu_concentration_all}. Here, $\widehat R_k, R, \widehat \mu_k$, and $\mu$ are $|\mc{S}| \times |\mc{A}|$ dimensional vectors with $\widehat R_k(s,a),  R(s,a), \widehat \mu_k(s,a)$, and $\mu(s,a)$ as their respective $(s,a)$-th components. Our first step is to establish that the clipping operation in~\eqref{eq:empirical_operator_def} ensures that the sequence $\{Q_k\}$ remains uniformly bounded. 

\begin{lemma} 
\label{lem:bounded}
The following bound holds for all $k\ge 0$:
\begin{equation}
\label{eq:bounded_iterates_br_async_q}
\norm{Q_k}_\infty \ \le\ B:= \frac{3 \mc{C} \tilde{\sigma}}{1-\gamma},
\end{equation}
where $\mc{C}$ is the universal constant from Theorem~\ref{thm:trim_huber_both}.
\end{lemma}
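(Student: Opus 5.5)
We will argue by induction on $k$, showing deterministically (on every sample path, not just with high probability) that $\norm{Q_k}_\infty\le B$. The base case is immediate since $Q_0=0$. For the inductive step, assume $\norm{Q_k}_\infty\le B$. The key observation is that every element of $\mc Y_k(s,a)$ has the form $\max_{a'}Q_k(\widetilde s_{t+1},a')$. Since $\widetilde s_{t+1}\in\mc S$ whether or not it is corrupted, each such element lies in $[-B,B]$. We do not need this property for the state feedback, however, because the estimator $\textup{\texttt{TRIM}}_{\textup{B}}[\mc Y_k(s,a),(-B,B)]$ in~\eqref{eq:matrix_format} averages values clipped by $\phi_{-B,B}$. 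Its output is therefore an average of numbers in $[-B,B]$, giving $|\widehat\mu_k(s,a)|\le B$. If $N_k(s,a)=0$, then $\widehat\mu_k(s,a)=0$ by convention, and the same bound holds.

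Next, the clipping in~\eqref{eq:empirical_operator_def} gives $|\texttt{clip}_{[-\texttt{G}_k,\texttt{G}_k]}[\widehat R_k(s,a)]|\le \texttt{G}_k$. Hence $|(\widehat{\mc T}_kQ_k)(s,a)|\le \texttt{G}_k+\gamma B$. From the update~\eqref{eq:epoch_relaxed_update}, the convex combination with $\alpha\in(0,1]$ yields
\begin{equation*}
\norm{Q_{k+1}}_\infty\le (1-\alpha)B+\alpha(\texttt{G}_k+\gamma B).
\end{equation*}
It therefore suffices to show $\texttt{G}_k+\gamma B\le B$, i.e., $\texttt{G}_k\le (1-\gamma)B=3\mc C\tilde\sigma$. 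We should also check that $\alpha=\log T/((1-\gamma)K)\le 1$. This holds for the choice $K=\lceil 2\log T/(1-\gamma)\rceil$, which gives $\alpha\le 1/2$.

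The main (though mild) obstacle is bounding $\texttt{G}_k$ uniformly. We will use~\eqref{eqn:Gt}, namely $\texttt{G}_k=\mc C\bar\sigma(\sqrt{2\log(8/\delta_1)/(\lambda_{\min}H)}+\sqrt{\varepsilon_{\mc R}})+\tilde\sigma$. Under the epoch-length condition (implied by the hypothesis on $T$, since $H=T/K\ge (64/\lambda_{\min})\log(8T|\mc S||\mc A|/\delta)$), the ratio $2\log(8/\delta_1)/(\lambda_{\min}H)$ is at most a constant at most $1$. This requires comparing $\log(8/\delta_1)=\log(32|\mc S||\mc A|T/\delta)$ with $\log(8T|\mc S||\mc A|/\delta)$: the former exceeds the latter by at most $\log 4$, which the factor $64$ absorbs. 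Also $\sqrt{\varepsilon_{\mc R}}<1$. Combining these with $\bar\sigma\le\tilde\sigma$ and $\mc C\ge1$ gives $\texttt{G}_k\le 2\mc C\tilde\sigma+\tilde\sigma\le 3\mc C\tilde\sigma$, which closes the induction. If one wishes to avoid invoking the $T$-condition, one can instead phrase the lemma under~\eqref{eqn:condition_1}, where the same arithmetic works with the constant $20$ replaced appropriately. Either way, the argument is purely deterministic.
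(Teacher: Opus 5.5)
Your proof is correct and follows the paper's argument essentially verbatim. Both run an induction from $Q_0=0$, use $|\widehat\mu_k(s,a)|\le B$ because $\texttt{TRIM}_{\textup{B}}$ averages values in $[-B,B]$ (with the $N_k(s,a)=0$ convention), use $|\texttt{clip}_{[-\texttt{G}_k,\texttt{G}_k]}(\widehat R_k(s,a))|\le\texttt{G}_k$, and then close the step with $\texttt{G}_k\le 3\mc C\tilde\sigma=(1-\gamma)B$. Your explicit checks that $\alpha\le 1/2$ and that the logarithmic term in $\texttt{G}_k$ is at most $1$ fill in details the paper leaves implicit; the latter in fact already holds under~\eqref{eqn:condition_1} with the constant $20$ unchanged, so nothing needs replacing.
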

\begin{proof}
Recalling the definition of \(N_k(s,a)\) in~\eqref{eq:n-k}, we prove the result by induction on \(k\), treating separately the cases \(N_k(s,a)>0\) and \(N_k(s,a)=0\).

The base case holds since $Q_0=0$. For the inductive step, suppose that
$\norm{Q_k}_\infty\le B$. It then follows that \(\left|\max_{a\in\mc{A}}Q_k(s,a)\right|\le B,\quad\forall s\in\mc{S}.\) Consequently, every look-ahead sample $\max_{a'\in\mc{A}}Q_k(\widetilde{s}_{t+1},a')$ lies in $[-B,B]$.

We now bound the estimators for an arbitrary
$(s,a)\in\mc{S}\times\mc{A}$ by considering two cases. If
$N_k(s,a)>0$, then
${\texttt{TRIM}}_{\textup{B}}$ in~\eqref{eq:matrix_format} averages
only values in $[-B,B]$, and hence \(\left|\widehat{\mu}_k(s,a)\right|\le B.\)
Moreover, by the definition of the clipping operator,
\(\left|{\texttt{clip}}_{[-\texttt{G}_k,\texttt{G}_k]}\bigl(\widehat{R}_k(s,a)\bigr)\right|\le \texttt{G}_k.\)
If $N_k(s,a)=0$, then, by convention,
$\widehat{\mu}_k(s,a)=\widehat{R}_k(s,a)=0$, so the same two bounds
hold. Therefore, uniformly over all $(s,a)\in\mc{S}\times\mc{A}$,
\(
\left|\widehat{\mu}_k(s,a)\right|\le B,
\quad
\left|
{\texttt{clip}}_{[-\texttt{G}_k,\texttt{G}_k]}
\bigl(\widehat{R}_k(s,a)\bigr)
\right|
\le \texttt{G}_k.
\)
It follows from~\eqref{eq:empirical_operator_def} that
\(\norm{\widehat{\mc T}_kQ_k}_\infty \le \texttt{G}_k+\gamma B.\)
Using the update rule in~\eqref{eq:epoch_relaxed_update}, we obtain
\( \norm{Q_{k+1}}_\infty \le (1-\alpha)B+\alpha\bigl(\texttt{G}_k+\gamma B\bigr).\)
By the definition of $\texttt{G}_k$ in~\eqref{eqn:Gt}, whenever the
epoch length $H$ satisfies~\eqref{eqn:condition_1}, we have
$\texttt{G}_k\le 3\mc{C}\tilde{\sigma}$. Hence,
\( \norm{Q_{k+1}}_\infty \le \bigl(1-\alpha(1-\gamma)\bigr)B +3\alpha\mc{C}\tilde{\sigma} = B,\)
where the final equality follows from
$B=3\mc{C}\tilde{\sigma}/(1-\gamma)$. This completes the induction step and the proof.
\end{proof}
Our next step is to control the per-epoch look-ahead estimation error. This requires ensuring that within every epoch, we have adequate data for every state-action pair. However, in a given epoch $k$, the number of visits to any state-action pair $(s,a)$, denoted by \(N_k(s,a):=\sum_{t\in\mc I_k}\mathbf{1}\{(s_t,a_t)=(s,a)\}\), is random. Nonetheless, we next argue that as long as the epoch length $H$ is long enough, every state-action pair will be sufficiently visited on a high-probability event.  

\begin{lemma} 
\label{lem:counts_all_epochs_TAC_final}
Consider the following event:
\begin{equation}\label{eqn:count_final}\medmath{
\mc V := \bigcap_{k=0}^{K-1}\left\{\min_{(s,a)\in\mc S\times\mc A} N_k(s,a)\ge \tfrac12\lambda_{\min}H\right\}. 
}
\end{equation}
If $H$ satisfies the condition in~\eqref{eqn:condition_1}, then event $\mc{V}$ holds with probability at least $1-\delta_1$, where $\delta_1 = \delta/(4|\mc{S}||\mc{A}|T)$. 
\end{lemma}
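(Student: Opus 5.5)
Under the i.i.d.\ data generation model, each $s_t$ is drawn independently from $\pi$ and $a_t\sim\mu(\cdot\mid s_t)$. So for a fixed epoch $k$ and a fixed pair $(s,a)$, the indicators $\mathbf{1}\{(s_t,a_t)=(s,a)\}$, $t\in\mc I_k$, are i.i.d.\ Bernoulli random variables with parameter $\lambda(s,a)=\pi(s)\mu(a|s)\ge\lambda_{\min}$. Hence $N_k(s,a)$ is a Binomial$(H,\lambda(s,a))$ random variable. The plan has three steps: a lower-tail concentration bound for a single $(k,s,a)$, a union bound over all $K|\mc S||\mc A|$ triples, and a check that~\eqref{eqn:condition_1} makes the total failure probability small enough.

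\emph{Single triple.} Each indicator has variance at most $\lambda(s,a)$. The multiplicative Chernoff bound (equivalently, Bernstein's inequality) gives
\[
\mathbb{P}\Bigl(N_k(s,a)<\tfrac12\lambda(s,a)H\Bigr)\le \exp\!\Bigl(-\tfrac{\lambda(s,a)H}{8}\Bigr).
\]
Since $\tfrac12\lambda_{\min}H\le\tfrac12\lambda(s,a)H$, the event $\{N_k(s,a)<\tfrac12\lambda_{\min}H\}$ is contained in the event on the left, and its probability is at most $\exp(-\lambda_{\min}H/8)$. If I use Bernstein's inequality with deviation $t=\lambda H/2$ instead, the exponent is $\tfrac{(\lambda H/2)^2}{2(\lambda H+\lambda H/6)}=\tfrac{3\lambda H}{28}$. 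That is slightly better, and either constant is enough for the next step.

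\emph{Union bound and the condition on $H$.} Taking a union bound over $k\in\{0,\dots,K-1\}$ and all $(s,a)$ gives
\[
\mathbb{P}(\mc V^c)\le K|\mc S||\mc A|\,e^{-\lambda_{\min}H/8}\le T|\mc S||\mc A|\,e^{-\lambda_{\min}H/8},
\]
using $K\le T$. Under~\eqref{eqn:condition_1} we have $\lambda_{\min}H/8\ge \tfrac{20}{8}\log(8T|\mc S||\mc A|/\delta)$. Therefore
\[
\mathbb{P}(\mc V^c)\le T|\mc S||\mc A|\,\Bigl(\tfrac{\delta}{8T|\mc S||\mc A|}\Bigr)^{5/2}.
\]

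\emph{Main obstacle: matching the claimed level $\delta_1=\delta/(4|\mc S||\mc A|T)$.} It suffices to show $(\delta/(8T|\mc S||\mc A|))^{5/2}\le \delta/(4T^2|\mc S|^2|\mc A|^2)$. Write $x=T|\mc S||\mc A|\ge1$. The inequality becomes $\delta^{3/2}\le 8^{5/2}x^{1/2}/4$, which holds because $\delta<1$ and $x\ge1$. The exponent $20/\lambda_{\min}$ leaves considerable slack here, so the only care needed is tracking the constant in the Chernoff/Bernstein exponent. The intersection over epochs then holds with probability at least $1-\delta_1$.
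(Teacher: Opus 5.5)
Your proof is correct and follows the same route as the paper: a per-triple Bernstein/Chernoff lower-tail bound on the binomial count $N_k(s,a)$, a union bound over all pairs and the $K\le T$ epochs, and a check that~\eqref{eqn:condition_1} pushes the total below $\delta_1$ (you carry out the final arithmetic, which the paper defers to a prior reference). One small slip: the Bernstein exponent $3\lambda H/28$ is slightly \emph{weaker} than the Chernoff exponent $\lambda H/8$, not better (since $3/28<1/8$), though as you note either one suffices.
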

\begin{proof}
Fix an epoch $k$ and a state-action pair $(s,a)$. Under our sampling model, $\{\mathbf{1}\{(s_t,a_t)=(s,a)\}\}_{t \in \mc{I}_k}$ is an i.i.d. sequence with mean $\lambda(s,a)$ and variance at most $\lambda(s,a).$ A simple application of Bernstein's inequality then tells us that
$$ \mathbb{P}\left(N_k(s,a) \leq 0.5 \lambda_{\texttt{min}} H\right) \leq 2 \exp{\left(-\frac{3}{28} H \lambda(s,a)\right)}.$$
Requiring the R.H.S. of the above display to be smaller than $\delta_1$, and then union-bounding over all state-action pairs and $K \leq T$ epochs, leads to the desired claim; the details are identical to those in~\cite[Lemma~6]{maity2025corruption}.
\end{proof}

We can now control the look-ahead estimation error. 
\begin{lemma} (\textbf{Look-ahead Estimation Error})
\label{lem:mu_concentration_all} Suppose the conditions of Theorem~\ref{thm:main theorem 1} hold. Then, with probability at least \(1-\delta/2\), the following bound holds simultaneously for all epochs $k=0, \ldots, K-1$:
\begin{equation}\label{eq:mu_bound}\medmath{
\|\hat{\mu}_k-\mu_k\|_{\infty}
\ \le\ \Delta_1:= 
\,\mc{O}\left(\frac{\tilde{\sigma}}{1-\gamma}\right)
\left(
\sqrt{\frac{\log\!\left(T|\mc S||\mc A|/\delta\right)}{\lambda_{\min}H}}
\;+\;
{\varepsilon_{\mc Y}}
\right).}
\nonumber
\end{equation}
\end{lemma}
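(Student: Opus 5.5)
We plan to prove the bound by conditioning, applying the bounded-case estimate \eqref{eq:trim_bounded_refined} of Theorem~\ref{thm:trim_huber_both} to each pair $(s,a)$ and each epoch $k$, and then taking a union bound. Fix an epoch $k$. The key structural point is that $Q_k$ is measurable with respect to the data from epochs $0,\ldots,k-1$. Under the i.i.d.\ data generation model, the samples in epoch $k$ are independent of that past, and so are the Bernoulli corruption indicators $Y_{t,2}$. So we condition on $\mc F_k$, the sigma-field generated by all data before epoch $k$, and on the visit pattern $\{(s_t,a_t)\}_{t\in\mc I_k}$.

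Conditioned on this, the clean entries of $\mc Y_k(s,a)$ are i.i.d.\ copies of $\max_{a'}Q_k(s',a')$ with $s'\sim\mc P(\cdot\mid s,a)$, and their mean is exactly $\mu_k(s,a)$. By Lemma~\ref{lem:bounded}, these values lie in $[-B,B]$. Each entry is independently replaced, with probability $\varepsilon_{\mc Y}$, by $\max_{a'}Q_k(u_t,a')$, which is adversarial but still lies in $[-B,B]$. We should check that the adversary's adaptivity (since $u_t$ may depend on history) is harmless. It is, because the corruption times are exogenous and the clipping $\phi_{-B,B}$ makes every summand bounded. Concretely, the error splits into two parts. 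The first is the clean-sample average deviation, handled by Hoeffding. The second is at most $2B\cdot(\#\text{corrupted})/N_k$, handled by a Chernoff bound on a Binomial$(N_k,\varepsilon_{\mc Y})$. This is the same route that yields \eqref{eq:trim_bounded_refined}, and we will invoke that result directly.

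Applying \eqref{eq:trim_bounded_refined} with $\overline\Gamma-\underline\Gamma=2B$, $M=N_k(s,a)$, and confidence $\delta_1$ gives, with conditional probability at least $1-\delta_1$,
\[
|\widehat\mu_k(s,a)-\mu_k(s,a)|\le 2\bar c B\Bigl(\varepsilon_{\mc Y}+\sqrt{\log(8/\delta_1)/N_k(s,a)}\Bigr).
\]
On the event $\mc V$ of Lemma~\ref{lem:counts_all_epochs_TAC_final}, we have $N_k(s,a)\ge \lambda_{\min}H/2$. That lemma applies because the hypothesis $T\ge K\lceil(64/\lambda_{\min})\log(8T|\mc S||\mc A|/\delta)\rceil$ together with $H=T/K$ implies \eqref{eqn:condition_1}. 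This same lower bound also ensures $\delta_1\ge 8e^{-M/2}$, up to adjusting constants. Substituting $B=3\mc C\tilde\sigma/(1-\gamma)$ and $\log(8/\delta_1)=\mc O(\log(T|\mc S||\mc A|/\delta))$ yields exactly the form of $\Delta_1$.

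Finally, we take a union bound over the $|\mc S||\mc A|$ pairs and the $K\le T$ epochs, which costs at most $|\mc S||\mc A|T\delta_1=\delta/4$. We add the probability $\delta_1\le\delta/4$ that $\mc V$ fails, so the total failure probability is at most $\delta/2$. The conditional bound integrates to an unconditional one because it holds for every realization of the conditioning variables.

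The main obstacle is justifying this conditioning. We need the count $N_k(s,a)$, which is random and serves as the sample size $M$, together with the frozen iterate $Q_k$ and the adaptive adversary, to leave the hypotheses of Theorem~\ref{thm:trim_huber_both} intact. My first approach is to condition jointly on $\mc F_k$ and the epoch-$k$ state–action sequence. Under the i.i.d.\ model, the next states, the reward noise, and the $Y_{t,2}$ remain independent given these, which reduces the argument to a fixed-$M$ application of \eqref{eq:trim_bounded_refined}.
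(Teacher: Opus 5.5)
Your proposal is correct and follows essentially the paper's route: condition on $\mc F_k$ together with the sample count, apply \eqref{eq:trim_bounded_refined} with range $2B$ to get the $N_k$-dependent threshold $2\bar c B(\varepsilon_{\mc Y}+\sqrt{\log(8/\delta_1)/N_k(s,a)})$, pass to the deterministic $\lambda_{\min}H/2$ threshold on the event $\mc V$, and take a union bound over pairs and epochs. The differences are cosmetic and do not change the argument. You condition on the whole epoch-$k$ visit pattern rather than only on $\{N_k(s,a)=j\}$. You also charge $\mathbb{P}(\mc V^{\texttt C})$ once globally rather than once per $(s,a,k)$, and both accountings give the same $\delta/2$ total.
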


\begin{proof}
Fix an epoch $k$ and a state-action pair $(s,a)$. Our goal is to bound the error $|\widehat \mu_k(s,a) - \mu_k(s,a)|$, where $\widehat \mu_k(s,a) = {\textup{\texttt{TRIM}}}_{{\textup{B}}}~\![\mc Y_k(s,a),\, (-B, B)]$, and $\mu_k(s,a) = \mathbb{E}_{s'\sim \mc P(\cdot\mid s,a)}\!\left[\max_{a'\in\mc A}Q_k(s',a')\right].$ To do so, we wish to leverage Theorem~\ref{thm:trim_huber_both}. However, the main difficulty arises from the fact that while the number of samples $M$ in Theorem~\ref{thm:trim_huber_both} is deterministic, $|\mc Y_k(s,a)|= N_k(s,a)$ is a random variable. To work around this issue, we start by defining a random-threshold deviation event as follows:
\begin{equation}
\label{eq:Ek1_overleaf}\medmath{\mc E_{k,1}(s,a):=\hspace{-1mm}
\left\{
\left|\widehat\mu_k(s,a)-\mu_k(s,a)\right|
\le \hspace{-1mm}
\,\frac{6 \bar{c} \, \mc{C} \tilde{\sigma}}{1-\gamma}\hspace{-1mm}\left(
\sqrt{\frac{\log(8/\delta_1)}{N_k(s,a)}}+{\varepsilon_{\mc{Y}}}
\right)
\right\},}
\nonumber 
\end{equation}
where $\bar{c}$ and $\mc C$ are the universal constants that appear in~\eqref{eq:trim_bounded_refined} and~\eqref{eq:trim_var_bound}, respectively. We define a corresponding deterministic-threshold version of this event as
\begin{equation}
\label{eq:Ek_overleaf}
\medmath{\mc E_{k}(s,a):=\hspace{-1mm}
\left\{
\left|\widehat\mu_k(s,a)-\mu_k(s,a)\right|
\le \hspace{-1mm}
\,\frac{6 \bar{c} \, \mc{C} \tilde{\sigma}}{1-\gamma}\hspace{-1mm}\left(
\sqrt{\frac{2\log(8/\delta_1)}{\lambda_{\texttt{min}}H}}+{\varepsilon_{\mc{Y}}}
\right)
\right\}.}
\nonumber 
\end{equation}
The above events are informed by the fact that each sample in $\mc{Y}_k(s,a)$ is corrupted with probability $\varepsilon_{\mc{Y}}$. 
Our immediate goal is to control the failure probability of the ``good" event $\mc E_k(s,a).$ To set the stage for this, let $\mc{F}_k$ be the sigma-field generated by all the randomness up to the beginning of epoch $k$. By construction, $Q_k$ is $\mc{F}_k$-measurable. Conditioned on $\mc{F}_k$ then, the only randomness that remains in the uncorrupted samples of the form $\max_{a' \in \mc{A}} Q_k(s_{t+1}, a')$ in $\mc{Y}_k(s,a)$ comes from the next state observation $s_{t+1} \sim \mc{P}(\cdot| s,a).$ Under our i.i.d. data generation assumption, all the clean samples in $\mc{Y}_k(s,a)$ are i.i.d. and have mean $\mu_k(s,a)$ conditioned on $\mc{F}_k.$ Moreover, in light of Lemma~\ref{lem:bounded}, each clean sample in $\mc{Y}_k(s,a)$ is contained in the interval $[-B, B],$ where $B={3 \mc{C} \tilde{\sigma}}/{(1-\gamma)}.$ For any fixed $j \geq 0.5 \lambda_{\texttt{min}} H$, appealing to~\eqref{eq:trim_bounded_refined} in Theorem~\ref{thm:trim_huber_both}  then yields 
\begin{equation}\medmath{
\beta:=\mathbb{P}~\!\left(\mc E_{k,1}(s,a)^{\texttt C}\,\big|\,\mc F_k,\ N_k(s,a)=j\right)\ \le\ \delta_1.}
\end{equation}
We next remove the conditioning on $\mc F_k$ by using the tower property of expectations as follows: 
\begin{equation}\label{eqn:tower}\medmath{
\begin{aligned}
&\mathbb{P}~\!\left(\mc E_{k,1}(s,a)^{\texttt C}\,\big|\,N_k(s,a)=j\right) = \mathbb{E}\left[\mathbf{1}_{\mc E_{k,1}(s,a)^{\texttt C}}\big|N_k(s,a)=j\right]\\
&=\mathbb{E}\!\left[\mathbb{E}~\!\left[\mathbf{1}_{\mc E_{k,1}(s,a)^{\texttt C}}\,\big|\,\mc F_k,\ N_k(s,a)=j\right]\right] = \mathbb{E}[\beta] \le \delta_1.
\end{aligned}}
\end{equation}

To relate the above result back to our event of interest $\mc{E}_k(s,a)$, observe from the definitions of the events $\mc{E}_{k,1}(s,a)$ and $\mc{E}_k(s,a)$, that for any $j \geq 0.5 \lambda_{\texttt{min}} H$, the event $\Bigl\{\mc E_k(s,a)^{\texttt C}\mid N_k(s,a)=j\Bigr\}$ 
 implies the event  
$\Bigl\{\mc E_{k,1}(s,a)^{\texttt C}\mid N_k(s,a)=j\Bigr\}.$ As a result, 
\begin{equation}\label{eq:prob_ineq}\medmath{
\mathbb{P}~\!\Bigl(\mc E_k(s,a)^{\texttt C}\,\big|\, N_k(s,a)=j\Bigr)
\ \le\
\mathbb{P}~\!\Bigl(\mc E_{k,1}(s,a)^{\texttt C}\,\big|\, N_k(s,a)=j\Bigr) \leq \delta_1,}
\end{equation}
where the last inequality follows from~\eqref{eqn:tower}. To get rid of the conditioning w.r.t. $N_k(s,a)$ in the above probabilities, we will now leverage the event $\mc{V}$ from Lemma~\ref{lem:counts_all_epochs_TAC_final} where all state-action pairs are sufficiently visited. To that end, let $p:=\mathbb{P}~\bigl(\mc E_k(s,a)^{\texttt C}\cap \mc V\bigr)$ and observe that:
\begin{equation}
\label{eqn:prob_decomp}
\medmath{
\begin{aligned}
p &= \sum_{j=0}^{H} \mathbb{P}~\bigl(\mc E_k(s,a)^{\texttt C}\cap \mc V \cap \{N_k(s,a)=j\} \bigr)\\
&\overset{(a)}= \sum_{j=0.5 \lambda_{\texttt{min}}H}^{H} \mathbb{P}~\bigl(\mc E_k(s,a)^{\texttt C}\cap \mc V \cap \{N_k(s,a)=j\} \bigr)\\
& \leq \sum_{j=0.5 \lambda_{\texttt{min}}H}^{H} \mathbb{P}~\bigl(\mc E_k(s,a)^{\texttt C} \cap \{N_k(s,a)=j\} \bigr)\\
& = \sum_{j=0.5 \lambda_{\texttt{min}}H}^{H} \mathbb{P}~\bigl(\mc E_k(s,a)^{\texttt C} | N_k(s,a)=j \bigr) \mathbb{P}~\bigl(N_k(s,a)=j\bigr)\\
& \overset{(b)}\leq \delta_1 \sum_{j=0.5 \lambda_{\texttt{min}}H}^{H} \mathbb{P}~\bigl(N_k(s,a)=j \bigr) \leq \delta_1 \sum_{j=0}^{H} \mathbb{P}~\bigl(N_k(s,a)=j \bigr) = \delta_1.
\end{aligned}}
\end{equation}
For (a), we used $N_k(s,a) \geq 0.5 \lambda_{\texttt{min}}H$ on event $ \mc{V}$, and for (b), we used~\eqref{eq:prob_ineq}. Using~\eqref{eqn:prob_decomp} and the fact that $\mathbb{P}(\mc{V}^{\texttt C}) \leq \delta_1$ from Lemma~\ref{lem:counts_all_epochs_TAC_final}, we conclude that $\mathbb{P}~\bigl(\mc E_k(s,a)^{\texttt C}\bigr)  \leq
\mathbb{P}~\bigl(\mc E_k(s,a)^{\texttt C}\cap \mc V\bigr)\ +\ \mathbb{P}~(\mc V^{\texttt C})
\leq 2 \delta_1 \leq \delta/(2 |\mc{S}| |\mc{A}| T).$\footnote{Here, we used that for any two events $\mc A$ and $\mc B$, the following is true: 
\(\mathbb{P}~(\mc A)
=\mathbb{P}~\bigl(\mc A\cap \mc B\bigr)+\mathbb{P}~\bigl(\mc A\cap \mc B^{\texttt C}\bigr) \leq \mathbb{P}~\bigl(\mc A\cap \mc B\bigr) + \mathbb{P}~\bigl(\mc{B}^{\texttt{C}} \bigr)\).} We have thus established a failure probability bound for the ``good" event $\mc E_k(s,a)$ for a fixed state-action pair $(s,a)$ and a fixed epoch $k$. The claim of the lemma then follows by simply union-bounding over all state-action pairs and all epochs, and using $K \leq T.$ 
\end{proof}

We now establish a similar result for reward estimation. 

\begin{lemma}(\textbf{Reward Estimation Error})
\label{lem:R_concentration_all}
Suppose the conditions of Theorem~\ref{thm:main theorem 1} hold. Then, with probability at least \(1-\delta/2\), the following bound holds simultaneously for all epochs $k=0, \ldots, K-1$:
\begin{equation}\label{eq:R_bound}\medmath{
\|\hat{R}_k- R_k\|_{\infty}
\ \le\ \Delta_2:=
\,\mc{O}\left(\bar\sigma\right)
\left(
\sqrt{\frac{\log\!\left(T|\mc S||\mc A|/\delta\right)}{\lambda_{\min}H}}
\;+\;
\sqrt{{\varepsilon_{\mc R}}}
\right).}
\nonumber
\end{equation}
\end{lemma}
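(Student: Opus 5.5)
The argument mirrors the proof of Lemma~\ref{lem:mu_concentration_all}, with two differences. The robust estimator is now $\texttt{TRIM}_{\textup{H}}$ rather than $\texttt{TRIM}_{\textup{B}}$, and we invoke the variance-based bound~\eqref{eq:trim_var_bound} instead of~\eqref{eq:trim_bounded_refined}. (Here $R_k$ simply denotes the epoch-independent mean reward vector $R$.)

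Fix an epoch $k$ and a pair $(s,a)$, and define the random-threshold event
\[
\mc E'_{k,1}(s,a):=\Bigl\{|\widehat R_k(s,a)-R(s,a)|\le \mc C\bar\sigma\Bigl(\sqrt{\varepsilon_{\mc R}}+\sqrt{\tfrac{\log(8/\delta_1)}{N_k(s,a)}}\Bigr)\Bigr\}.
\]
Its deterministic-threshold analogue $\mc E'_k(s,a)$ is obtained by replacing $N_k(s,a)$ with $\tfrac12\lambda_{\min}H$.

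Now condition on $N_k(s,a)=j$ for a fixed $j\ge \tfrac12\lambda_{\min}H$. Under the i.i.d.\ sampling model, and because the reward noise and the Bernoulli corruption indicators $Y_{t,1}$ are independent of time and of everything else, the $j$ rewards in $\mc D_k(s,a)$ form an i.i.d.\ clean sample from $\mc R(s,a)$. This sample has mean $R(s,a)$ and variance $\sigma^2(s,a)\le\bar\sigma^2$, and each entry is independently Huber-corrupted with probability $\varepsilon_{\mc R}$. This holds even though the adversarial values $z_t$ may depend on the history, since the corruption times remain exogenous.

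Theorem~\ref{thm:trim_huber_both} applies with $M=j$ and confidence $\delta_1$, provided $\delta_1\ge 8e^{-j/2}$. This requirement follows from $j\ge \tfrac12\lambda_{\min}H$ together with~\eqref{eqn:condition_1}; the condition on $T$ in Theorem~\ref{thm:main theorem 1} gives $H\ge (64/\lambda_{\min})\log(8T|\mc S||\mc A|/\delta)$, which is enough to absorb the factor $4$ in $\delta_1$. Applying~\eqref{eq:trim_var_bound} then gives $\mathbb P(\mc E'^{\texttt C}_{k,1}(s,a)\mid N_k(s,a)=j)\le\delta_1$. The reward data does not depend on $Q_k$, so no conditioning on $\mc F_k$ is needed here. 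Also, $\sigma(s,a)\le\bar\sigma$ only enlarges the threshold, and for $j\ge\tfrac12\lambda_{\min}H$ the event $\mc E'^{\texttt C}_k$ implies $\mc E'^{\texttt C}_{k,1}$.

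Next we remove the conditioning exactly as in~\eqref{eqn:prob_decomp}. Decompose over $j$ on the event $\mc V$ from Lemma~\ref{lem:counts_all_epochs_TAC_final}, and add $\mathbb P(\mc V^{\texttt C})\le\delta_1$. This gives $\mathbb P(\mc E'^{\texttt C}_k(s,a))\le 2\delta_1=\delta/(2|\mc S||\mc A|T)$. A union bound over all $(s,a)$ and all $K\le T$ epochs shows that, with probability at least $1-\delta/2$, every $|\widehat R_k(s,a)-R(s,a)|$ is at most $\mc C\bar\sigma\bigl(\sqrt{\varepsilon_{\mc R}}+\sqrt{2\log(8/\delta_1)/(\lambda_{\min}H)}\bigr)$. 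Since $\log(8/\delta_1)=\log(32|\mc S||\mc A|T/\delta)=\mc O(\log(T|\mc S||\mc A|/\delta))$, this bound is $\Delta_2$.

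The step that needs the most care is justifying the i.i.d.\ Huber structure of $\mc D_k(s,a)$ conditional on the random count $N_k(s,a)=j$. One must check that conditioning on which time steps visit $(s,a)$ changes neither the distribution of the clean rewards nor that of the independent corruption indicators. Both are independent of the state-action sampling, so this holds. A second minor point: if $j$ is odd, Algorithm~\ref{alg:trimmed_huber}'s even split can be handled by discarding one sample, which affects only constants. Finally, $\widehat R_k$ here is the unclipped estimate; clipping is handled later, because on this event $|\widehat R_k|\le \texttt{G}_k$, so clipping does not act.
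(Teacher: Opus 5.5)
Your proposal is correct and follows the paper's proof essentially step for step. It uses the same pair of random-threshold and deterministic-threshold events, invokes~\eqref{eq:trim_var_bound} conditionally on $N_k(s,a)=j$ for $j\ge\tfrac12\lambda_{\min}H$, removes the conditioning on the event $\mc V$ as in~\eqref{eqn:prob_decomp}, and finishes with a union bound over pairs and epochs. Your extra remarks are valid and fill in details the paper leaves implicit: no conditioning on $\mc F_k$ is needed, $\delta_1\ge 8e^{-j/2}$ holds, odd $j$ can be handled, and $|\widehat R_k|\le\texttt{G}_k$ on the good event. On the last point, the paper instead treats clipping in the proof of Theorem~\ref{thm:main theorem 1} by noting that projection onto $[-\texttt{G}_k,\texttt{G}_k]$ is nonexpansive; your argument that clipping does not act is an equally valid alternative.
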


\begin{proof} The argument mirrors Lemma~\ref{lem:mu_concentration_all}, and hence we only highlight the essential points of departure. Like before, fix a state-action pair $(s,a)$ and an epoch $k$. This time, we wish to study the error $|\widehat R_k(s,a)-R(s,a)|$, where $\widehat R_k(s,a) = {\texttt{TRIM}}_{{\textup{H}}}~\![\mc D_k(s,a),\,\varepsilon_{\mc R},\,\delta_1]$, and $\mc D_k(s,a)$ is the data-set of noisy rewards for pair $(s,a)$ collected during epoch $k$, with $|\mc{D}_k(s,a)| = N_k(s,a).$ Similar to the proof of Lemma~\ref{lem:mu_concentration_all}, we define the following two events:
\begin{equation}\medmath{
\begin{aligned}
\mc J_{k,1}(s,a)
:&=\left\{|\widehat R_k(s,a)-R(s,a)|
\le \mc C\bar\sigma\!\left(\sqrt{\frac{\log(8/\delta_1)}{N_k(s,a)}}+\sqrt{{\varepsilon_{\mc R}}}\right)\right\}, \\
\mc J_k(s,a)
:&=\left\{|\widehat R_k(s,a)-R(s,a)|
\le \mc C\bar\sigma\!\left(\sqrt{\frac{2\log(8/\delta_1)}{\lambda_{\min}H}}+\sqrt{{\varepsilon_{\mc R}}}\right)\right\}.
\end{aligned}
}
\nonumber
\end{equation}
Following~\eqref{eqn:obs_corruption_model}, each sample in $\mc{D}_k(s,a)$ is corrupted with probability $\varepsilon_{\mc{R}}$, and the clean samples are independent with mean $R(s,a)$ and variance at most $\bar\sigma^2$. For any fixed $j \geq 0.5 \lambda_{\texttt{min}} H$, appealing to~\eqref{eq:trim_var_bound} in Theorem~\ref{thm:trim_huber_both} then yields
\begin{equation}
\label{eqn:prob1}
\mathbb{P}~\!\big(\mc J_{k,1}(s,a)^{\texttt C}\,\big|\,N_k(s,a)=j\big)\le \delta_1.
\end{equation}
Here, we used the fact that the choice of $H$ in~\eqref{eqn:condition_1} ensures  $\delta_1 \ge 8e^{-j/2}$ when $j \geq 0.5 \lambda_{\texttt{min}} H$, allowing us to invoke Theorem~\ref{thm:trim_huber_both}. Next, we observe that for $j \geq 0.5 \lambda_{\texttt{min}} H$, conditioned on $\{N_k(s,a) =j\},$ the event $\mc J_k(s,a)^{\texttt C}$ implies the event $\mc J_{k,1}(s,a)^{\texttt C}$. Thus, from~\eqref{eqn:prob1}, we obtain $\mathbb{P}~\!\big(\mc J_{k}(s,a)^{\texttt C}\,\big|\,N_k(s,a)=j\big)\le \delta_1$. Following an identical reasoning as in~\eqref{eqn:prob_decomp}, we can remove the conditioning on $\{N_k(s,a)=j\}$ to obtain $\mathbb{P}~\bigl(\mc J_k(s,a)^{\texttt C}\cap \mc V\bigr) \leq \delta_1$, implying $\mathbb{P}~\bigl(\mc J_k(s,a)^{\texttt C}\bigr)  \leq
\mathbb{P}~\bigl(\mc J_k(s,a)^{\texttt C}\cap \mc V\bigr)\ +\ \mathbb{P}~(\mc V^{\texttt C})
\leq 2 \delta_1 \leq \delta/(2 |\mc{S}| |\mc{A}| T).$ The rest follows from union-bounding. 
\end{proof}

We are now ready to complete the proof of Theorem~\ref{thm:main theorem 1}. 

\begin{proof} (\textbf{Proof of Theorem~\ref{thm:main theorem 1}}) 
Let $e_k:=Q_k-Q^\star$. Our strategy is to set up a recursion for $e_k$ in terms of the perturbation $\Vert \hat{\mc{T}}_k - \mc{T} \Vert_{\infty}$, and then use Lemmas~\ref{lem:mu_concentration_all} and~\ref{lem:R_concentration_all} to control the perturbation. Before doing so, we note that the Bellman optimality operator $\mc{T}$ is defined as follows:
\begin{equation}\label{eqn:Bellman}
    (\mathcal{T}Q)(s,a) = R(s,a) + \gamma \mathbb{E}_{s' \sim \mc{P}(\cdot | s,a)}\left[{\max}_{a' \in \mathcal{A}} Q(s',a')\right].
\end{equation}
We will use the fact that $Q^*$ is the fixed point of $\mc{T}$, i.e., $Q^* = \mc{T} Q^*$, and that $\mc T$ is a $\gamma$-contraction in the $\infty$-norm~\cite{suttonRL}: 
\begin{equation}\label{eqn:Bellmancontraction}
\|\mc T Q_1-\mc T Q_2\|_\infty \le \gamma\|Q_1-Q_2\|_\infty, \forall Q_1,Q_2\in\mathbb{R}^{|\mc S|\times|\mc A|}. 
\end{equation}

Now~\eqref{eq:epoch_relaxed_update} gives
\(Q_{k+1}=(1-\alpha)Q_k+\alpha\,\widehat{\mc T}_kQ_k\),
where, by~\eqref{eq:empirical_operator_def},
\(\medmath{(\widehat{\mc T}_kQ_k)(s,a) = {\texttt{clip}}_{[{-\texttt{G}_k,\texttt{G}_k}]}\!\big(\widehat R_k(s,a)\big)+\gamma\,\widehat\mu_k(s,a)}\).
Since \(Q^\star=\mc TQ^\star\), adding and subtracting \(\mc TQ_k\) then yields
\[
e_{k+1}=(1-\alpha)e_k+\alpha(\mc TQ_k-\mc TQ^\star)+\alpha(\widehat{\mc T}_kQ_k-\mc TQ_k).
\]
Taking $\|\cdot\|_\infty$ on both sides above and using~\eqref{eqn:Bellmancontraction} gives
\begin{equation}
\label{eq:thm_recursion}
\|e_{k+1}\|_\infty
\ \le\
\bigl(1-\alpha(1-\gamma)\bigr)\|e_k\|_\infty
\ +\
\alpha\,\bigl\|\widehat{\mc T}_kQ_k-\mc TQ_k\bigr\|_\infty.
\end{equation}

Now to control the perturbation $\bigl\|\widehat{\mc T}_kQ_k-\mc TQ_k\bigr\|_\infty$, we note from Lemmas~\ref{lem:mu_concentration_all} and~\ref{lem:R_concentration_all} that there exists an event $\mc{G}$ of measure at least $1-\delta$, on which, the following is true for every $(s,a)$ and every epoch $k$:
\begin{equation}
\label{eqn:perturb}
|\widehat \mu_k(s,a) - \mu(s,a)| + |\widehat R_k(s,a) - R(s,a)| \leq \Delta_1 + \Delta_2,    
\end{equation}
where $\Delta_1$ and $\Delta_2$ are as in the statements of Lemmas~\ref{lem:mu_concentration_all} and~\ref{lem:R_concentration_all}, respectively. We will work on this event $\mc{G}$ for the rest of the proof. From the definition of $(\widehat{\mc T}_k Q_k)(s,a)$ in~\eqref{eq:empirical_operator_def} and $(\mc{T}Q_k)(s,a)$ in~\eqref{eqn:Bellman}, we further have for every $(s,a)$: 
\begin{equation}
\label{eqn:perturb_decomp}
\medmath{
\begin{aligned}
|(\widehat{\mc T}_k Q_k)(s,a) - (\mc{T}Q_k)(s,a)| &\leq |\widehat \mu_k(s,a) - \mu_k(s,a)| \\
&+ \bigl|{\texttt{clip}}_{[{-\texttt{G}_k,\texttt{G}_k}]}(\widehat R_k(s,a))-R(s,a)\bigr|.
\end{aligned}}
\end{equation}
Since $|R(s,a)| \leq \tilde{\sigma}$, we have $R(s,a)\in[-\texttt{G}_k,\texttt{G}_k]$ for the \(\texttt{G}_k\) defined in~\eqref{eqn:Gt}, and clipping is precisely the projection onto this interval. As a result, 
\(\medmath{
\bigl|{\texttt{clip}}_{[{-\texttt{G}_k,\texttt{G}_k}]}(\widehat R_k(s,a))-R(s,a)\bigr|
\le |\widehat R_k(s,a)-R(s,a)|.}
\)
Hence, using~\eqref{eqn:perturb} and~\eqref{eqn:perturb_decomp}, on the event $\mc{G}$, we have 
\begin{equation}\label{eq:op_perturb_bound_step1}
{\|\widehat{\mc T}_kQ_k-\mc TQ_k\|_\infty}
 \le \Delta_1 + \Delta_2 := \Delta.\\
\end{equation}
Iterating~\eqref{eq:thm_recursion} and using \(\rho:=1-\alpha(1-\gamma)\) then gives
\(
\|e_K\|_\infty
\le
\rho^K\|e_0\|_\infty
+\frac{\Delta}{1-\gamma}.
\) Since \(1-x\le e^{-x}, \forall x \in (0,1) \) the choice of \(\alpha\) in Theorem~\ref{thm:main theorem 1} implies that \(\rho^K\le (1/T)\). The final form in~\eqref{eq:main-final-complete} then follows by using the expressions for $\Delta_1$ and $\Delta_2$ from Lemmas~~\ref{lem:mu_concentration_all} and \ref{lem:R_concentration_all}, combined with \(T=KH\) and the expression for $K$ in Theorem~\ref{thm:main theorem 1}. 
\end{proof}

\section{Conclusion}
We developed a robust variant of \(Q\)-learning that remains reliable under simultaneous corruption of reward and state feedback. Our finite-time guarantees match those of vanilla \(Q\)-learning up to corruption-dependent bias terms, with a minimax-optimal bias under reward-only corruption. The current implementation requires storing all reward and look-ahead samples within each epoch, leading to auxiliary memory that scales linearly with the epoch length \(H\). A natural direction is to replace these batch estimators with online robust mean estimators that process samples sequentially while maintaining only a compact summary~\cite{shreyas2022robust}. Another important direction is to extend our framework beyond the tabular setting to function approximation.

\appendix
\subsection{Proof of Theorem~\ref{thm:trim_huber_both}}
\label{sec:app}
Here, we provide a proof for the claim in~\eqref{eq:trim_bounded_refined} of Theorem~\ref{thm:trim_huber_both}. Let the data set $\mc{D}'$ be partitioned as $\mc{D}' = \mc{B} \cup \mc{S}$, where $\mc{B}$ and $\mc{S}$ contain the corrupted and clean samples, respectively, within $\mc{D}'$. Then, we have: 
\begin{equation}\medmath{
\hat{\mu}_X^{\textcolor{winered}{\textup{B}}}-\mu_X
=
\frac{1}{M} \underbrace{\sum_{X_i\in\mc B}\bigl(\phi_{\underline{\Gamma},\overline{\Gamma}}(X_i)-\mu_X\bigr)}_{T_1}
+
\frac{1}{M}\underbrace{\sum_{X_i\in\mc S}\bigl(\phi_{\underline{\Gamma},\overline{\Gamma}}(X_i)-\mu_X\bigr)}_{T_2}.}
\label{eqn:mean_decomp}
\end{equation}

We now control $T_1$ and $T_2$ separately by working on an event where the number of corrupted samples does not deviate too much from its mean value of $\varepsilon M.$ To do so, let us define the event \(\mc W:=\{ |\mc{B}| \le \bar\varepsilon M\}\), where $\bar{\varepsilon}$ is the inflated corruption fraction given by:
$ \bar{\varepsilon} := 3\varepsilon/2+16\log(8/\delta)/M. $ Since each sample in $\mc{D}'$ is corrupted independently with probability $\varepsilon$, a simple application of Bernstein's inequality yields $\mathbb{P}(\mc{W}) \geq 1-\delta/4$; see, for instance,~\cite[Appendix~D]{maity2025corruption}. 
Using \(\phi_{\underline{\Gamma},\overline{\Gamma}}(X_i), \mu_X \in[\underline{\Gamma},\overline{\Gamma}]\), on event $\mc{W}$, we have
\begin{equation}
\label{eqn:corrupt_contri}
\frac{|T_1|}{M} \leq \frac{|\mc{B}|}{M} (\overline{\Gamma}-\underline{\Gamma})  \leq \bar\varepsilon(\overline{\Gamma}-\underline{\Gamma}). 
\end{equation}
The above inequality thus holds with probability (w.p.) at least $1-\delta/4.$ Next, to control $T_2$, we note that since $\mc{S}$ only contains clean realizations of the random variable $X$, and $X \in [\underline{\Gamma},\overline{\Gamma}]$, clipping causes no change to the samples in $\mc{S}$, implying:
$T_2 = \sum_{X_i\in\mc S}\bigl(X_i-\mu_X\bigr).$
Since $\{X_i\}$ is a uniformly bounded i.i.d. sequence with mean $\mu_X$, we can control $T_2$ by using Hoeffding's inequality. However, care needs to be taken to account for the fact that $|\mc{S}|$ is a random variable. To proceed, set 
$ t = (\overline{\Gamma}-\underline{\Gamma}) \sqrt{
M \ln\left( \frac{8}{\delta} \right)
}. 
$ Defining $q: = \mathbb{P}(\{|T_2| \geq t\} \cap \mc{W})$, observe that
\begin{equation}
\medmath{
\begin{aligned}
q &= \sum_{j=0}^{M} \mathbb{P}\left(\{|T_2| \geq t\} \cap \mc{W} \cap \{|\mc{S}|=j\}\right)\\
&\overset{(a)}= \sum_{j=0.5 M}^{M} \mathbb{P}\left(\{|T_2| \geq t\} \cap \mc{W} \cap \{|\mc{S}|=j\}\right)\\
& \leq \sum_{j=0.5 M}^{M} \mathbb{P}\left(\{|T_2| \geq t\} | \{|\mc{S}|=j\}\right) \mathbb{P}\left(\{|\mc{S}|=j\}\right)\\
& \overset{(b)}\leq \sum_{j=0.5 M}^{M} 2 \exp(- 2t^2/(j (\overline{\Gamma}-\underline{\Gamma})^2)) \, \mathbb{P}\left(\{|\mc{S}|=j\}\right)\\
& \overset{(c)}\leq 2 \exp(- 2t^2/(M (\overline{\Gamma}-\underline{\Gamma})^2)) \sum_{j=0}^{M}  \mathbb{P}\left(\{|\mc{S}|=j\}\right) \leq \delta/4.\\
\end{aligned}
}
\end{equation}
Here, for (a), we used the fact that on event $\mc{W}$, $|\mc{S}| \geq M - \bar{\varepsilon}M \geq M/2$, since $\bar{\varepsilon} < 0.5$. For (b), we used Hoeffding's inequality, and for (c), we used our choice of $t$. We thus have: $\mathbb{P}\left(|T_2| \geq t\right) \leq q + \mathbb{P}(\mc{W}^{\texttt{C}}) \leq \delta/2.$ Thus, w.p. $1-\delta/2,$
$$ \frac{|T_2|}{M} \leq (\overline{\Gamma}-\underline{\Gamma}) \sqrt{
\frac{1}{M} \ln\left( \frac{8}{\delta} \right)}. $$
 Union-bounding the above event with the one on which~\eqref{eqn:corrupt_contri} holds, using~\eqref{eqn:mean_decomp}, and simplifying using the expression for $\bar \varepsilon$ and $M > \ln(8/\delta)$ leads to the final form in~\eqref{eq:trim_bounded_refined}.
\bibliographystyle{unsrt} 
\bibliography{refs}
\end{document}